\documentclass[10pt,twocolumn,letterpaper]{article}

\usepackage[pagenumbers]{cvpr} 

\definecolor{cvprblue}{rgb}{0.21,0.49,0.74}
\usepackage[pagebackref,breaklinks,colorlinks,allcolors=cvprblue]{hyperref}

\usepackage[utf8]{inputenc} 
\usepackage[T1]{fontenc}    
\usepackage{hyperref}       
\usepackage{url}            
\usepackage{booktabs}       
\usepackage{amsfonts}       
\usepackage{nicefrac}       
\usepackage{microtype}      
\usepackage{xcolor}         
\usepackage{amsmath, amssymb}
\usepackage{graphicx}
\usepackage{arydshln}
\usepackage{amssymb}
\usepackage{amsmath}
\usepackage{amsthm}
\usepackage{lipsum} 
\usepackage{algorithm}
\usepackage{algorithmicx}
\usepackage{algpseudocode}  
\usepackage{enumitem}
\usepackage{multirow}
\usepackage{subcaption}
\usepackage{pifont}
\usepackage{bm}
\usepackage{breqn}
\usepackage{makecell}
\usepackage{arydshln}
\usepackage{svg}

\newtheorem{proposition}{Proposition}

\def\paperID{*****} 
\def\confName{CVPR}
\def\confYear{2026}

\title{Robust Self-Training with Closed-loop Label Correction \\ for Learning from Noisy Labels}

\author{
  Zhanhui Lin \\
  \and
  Yanlin Liu \\
  \and
  Sanping Zhou \\
}

\newcommand{\x}{x}
\newcommand{\y}{y}
\newcommand{\z}{z}
\newcommand{\fdim}{d'}
\newcommand{\meta}{m}
\newcommand{\xmeta}{x^{\meta}}
\newcommand{\ymeta}{y^{\meta}}
\newcommand{\zmeta}{z^{\meta}}
\newcommand{\ytilde}{\tilde{y}}
\newcommand{\yhat}{\hat{y}}
\newcommand{\ybar}{\bar{y}}

\newcommand{\ybarmeta}{\ybar^{\meta}}

\newcommand{\ytildehatmeta}{\hat{\ytilde}^{\meta}}

\newcommand{\Dtrain}{\tilde{\mathcal{D}}}
\newcommand{\Dmeta}{\mathcal{D}^{\text{m}}}
\newcommand{\Dtrainm}{\mathcal{D}^{\text{m}}_{\text{train}}}
\newcommand{\Dvalm}{\mathcal{D}^{\text{m}}_{\text{val}}}
\newcommand{\Ntrain}{N}
\newcommand{\Nmeta}{M}

\newcommand{\Loss}{\mathcal{L}}

\DeclareMathOperator*{\argmin}{arg\,min}   

\begin{document}

\maketitle

\begin{abstract}
Training deep neural networks with noisy labels remains a significant challenge, often leading to degraded performance. Existing methods for handling label noise typically rely on either transition matrix, noise detection, or meta-learning techniques, but they often exhibit low utilization efficiency of noisy samples and incur high computational costs. In this paper, we propose a self-training label correction framework using decoupled bilevel optimization, where a classifier and neural correction function co-evolve. Leveraging a small clean dataset, our method employs noisy posterior simulation and intermediate features to transfer ground-truth knowledge, forming a closed-loop feedback system that prevents error amplification.
Theoretical guarantees underpin the stability of our approach, and extensive experiments on benchmark datasets like CIFAR and Clothing1M confirm state-of-the-art performance with reduced training time, highlighting its practical applicability for learning from noisy labels.
\end{abstract}

\section{Introduction}
\label{sec:intro}

The remarkable success of deep neural networks (DNNs) in various domains has highlighted the critical importance of large-scale datasets for achieving high-performance models. However, obtaining completely clean training data is often prohibitively expensive in real-world applications. When DNNs are trained on noisy datasets, they tend to memorize noise rather than learn meaningful patterns, leading to performance degradation and reduced generalization capability \citep{degrade, degrade2}. This phenomenon has spurred extensive research into robust training methods that can effectively handle noisy labels, giving rise to the field of learning with noisy labels (LNL).

While fully clean datasets are scarce, accurately annotating a small subset of data remains feasible. Research has shown that even a limited amount of clean data can significantly enhance model performance \citep{distill}, as it provides reliable supervision that anchors learning and corrects noise-induced biases. This observation has motivated several approaches in the LNL literature.
Transition matrix-based methods attempt to estimate the mapping between noisy and clean labels using clean data. Despite progress, these approaches struggle with instance-dependent noise patterns \citep{yang2021estimating, instance-dependent} and suffer from high variance that affects neural network learning. Meta-learning approaches address this challenge by optimizing hyperparameters on small-scale clean data through bilevel optimization. However, these methods suffer from interpretability deficiencies, and their inner-loop virtual updates significantly increase memory usage and computational cost \citep{mloc, metacleaner}, limiting their application to large-scale scenarios. Other methods apply clean samples for noise detection to identify contaminated data and train with reduced weights or discard them entirely \citep{shafir2025active, yu2023delving}. However, these approaches are typically multi-stage, time-consuming, and underutilize noisy samples through weighting or discarding.

To address these limitations, we propose a novel self-training label correction framework that enables the noisy label correction function and the main classifier to co-evolve synergistically without error amplification.
Whereas existing methods often rely on final output pseudo-labels, which may be inaccurate and limit the capacity to model complex noise patterns, our approach leverages intermediate feature representations that encapsulate richer information about the underlying data.
We employ noisy posterior simulation to facilitate the transfer of ground-truth knowledge from the clean dataset to the refinement of noisy labels via the correction function. As the classifier is trained on progressively higher-quality corrected labels, its feature extractor generates representations that increasingly approximate the true semantic distribution of the data, thereby further enhancing the performance of the correction function.

Our approach offers several key advantages over existing methods. It explicitly leverages clean data as feedback during training, thereby mitigating cumulative error amplification and circumventing the complex threshold selection issues prevalent in traditional self-training methods that rely on heuristic designs \cite{Song2019SELF, Zheng2022SELC}.
Unlike weighting or noise detection approaches, our method corrects rather than down-weights or discards noisy labels, thereby achieving higher utilization of noisy data while reducing variance.
Whereas existing literature often necessitates additional self-supervised learning \cite{Tu2023DMLP, emlc} or a dual model \cite{wang2024pspu} to generate pseudo-labels, our correction process operates in a considerably more lightweight and stable manner.
Moreover, compared to state-of-the-art meta-learning methods, our approach utilizes clean data in a distinct, decoupled fashion, achieving superior performance with lower computational complexity and enhanced suitability for distributed training.

\paragraph{Contributions} Our main contributions are three-fold. First, we propose a self-training framework where classifier and correction function co-optimize synergistically, using clean labels to guide the self-rectification on incorrect labels and prevent drift from ground truth. Second, we provide a theoretical guarantee showing that model performance is maintained or improved even when suboptimal new labels are introduced, confirming the stability of our method during noise correction. Third, experiments on CIFAR and Clothing1M datasets demonstrate that our method achieves performance breakthroughs while reducing training time, validating its practical value in large-scale scenarios with limited clean data.

\section{Related Work} \label{sec:related}

\paragraph{Strategies for Learning with Noisy Labels} The field of Learning with Noisy Labels (LNL) has developed diverse strategies to combat label noise, which can otherwise cause deep networks to memorize errors and generalize poorly \cite{Arpit2017Closer, Li2024NoisySurvey}. Broadly, these methods include: (1) Loss-based strategies, which design robust loss functions (e.g., GCE \cite{Zhang2018Generalized}) or estimate a noise transition matrix (NTM) to correct the loss \cite{Xia2019Anchor}; (2) Sample selection, which identifies and down-weights or removes likely noisy samples, often using techniques like co-teaching \cite{Han2018Coteaching} or mixture models (e.g., DivideMix \cite{Li2020DivideMix}); and (3) Label refurbishment, which attempts to directly repair noisy labels using model predictions \cite{Lyu2024RobustLogit}. Regularization, such as consistency training and data augmentation (e.g., MixUp \cite{Li2020DivideMix}), is also a critical component in robust training.

\paragraph{Self-Training Methods for LNL} Self-training methods have gained prominence in LNL by iteratively refining noisy labels using the model's own predictions as pseudo-labels, often in a semi-supervised manner. For instance, Noisy Student \cite{Xie2020NoisyStudent} iteratively trains student models on pseudo-labeled data augmented with noise, improving robustness through teacher-student distillation. Similarly, SELF \cite{Nguyen2020SELF} employs self-ensembling to filter noisy labels, leveraging ensemble predictions to enhance pseudo-label quality. ProSelfLC \cite{Yao2021ProSelfLC} progressively corrects labels by trusting the model's own predictions more over time, exploiting the early-learning phenomenon to mitigate noise memorization. More recent approaches, such as SELC \cite{Zheng2022SELC}, use self-ensemble techniques for label correction in supervised settings, while methods like DISC \cite{Li2023DISC} dynamically select and correct instances based on model confidence.
These methods are often limited in their capacity to model complex noise patterns. Moreover, they frequently rely on heuristic strategies, such as voting or thresholding, which lack theoretical guarantees and can lead to model collapse due to iterative error propagation.

\paragraph{LNL with a Limited Clean Dataset} The availability of a small, trusted clean dataset, as explored in this work, significantly bolsters LNL. This clean set is often used in several ways. Some treat the problem as semi-supervised learning \cite{Song2019SELF, Zhu2025ZMT}. Meta-learning approaches utilize the clean data as a meta-validation set to learn sample re-weighting functions \cite{mwnet, metacleaner} or pseudo-label bootstrapping weights \cite{l2b, metadistill}. While effective, these methods often introduce high computational and memory overhead due to their coupled, inner-loop optimization. Alternatively, the clean set can facilitate direct noise estimation and correction. This includes using it to estimate a global NTM \cite{glc, fasten} or to provide anchors for label refurbishment \cite{Lyu2024RobustLogit}. Our work falls into this category but proposes a decoupled, co-evolutionary framework to overcome the limitations of both heuristic self-training and costly meta-learning.
    
\begin{figure*}[t]
\centering
\vspace{-10pt}
\includegraphics[width=0.92\linewidth]{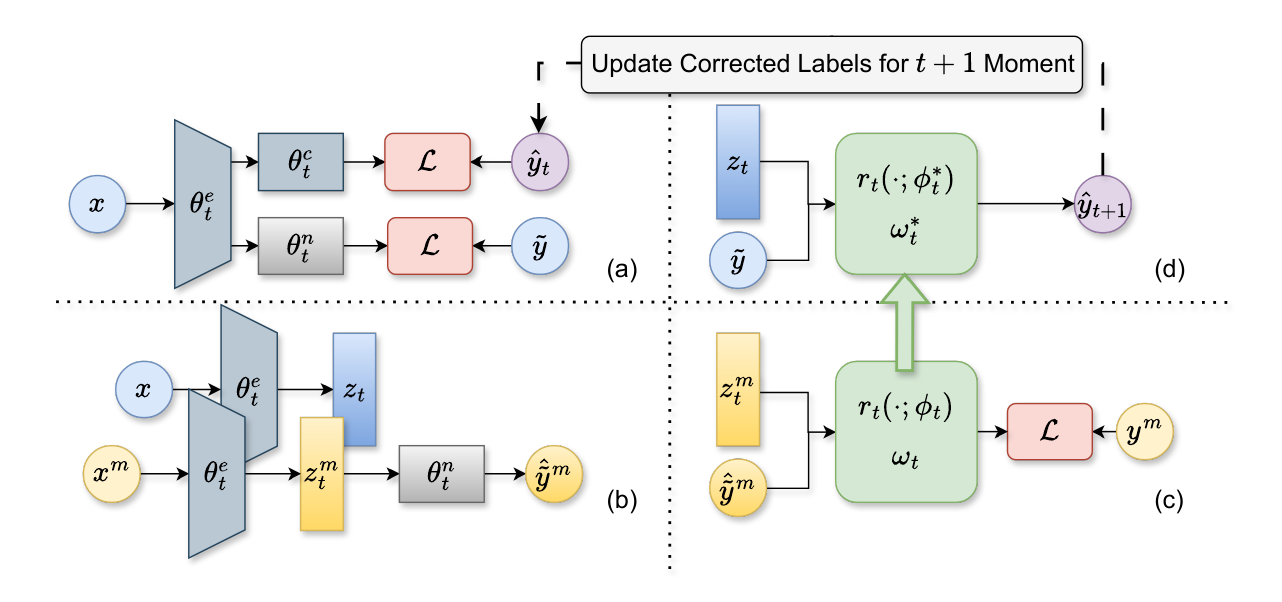}
\vspace{-15pt}
\caption{This framework provides a general overview of the approach. (a) Training the classifier on the corrected labels. (b) Feature collection and simulation of the clean-data noisy posterior. (c) Optimization of the correction function. (d) Noisy label correction.}
\label{fig:method}
\end{figure*}

\section{Preliminaries}
\label{sec:pre}

In this study, we address the problem of $c$-class classification. The input feature space is represented as $\x \in \mathbb{R}^d$, and the label space is denoted as $\y \in \mathbb{R}^c$, where $c$ signifies the number of classes.
We define the noisy training dataset as $\Dtrain = \{(\x_{i}, \ytilde_{i})\}_{{i}=1}^{\Ntrain}$, where each $\ytilde_{i}$ is an observed noisy label.
We also incorporate a subset of trusted data, referred to as the clean validation dataset $\Dmeta = \{(\xmeta_{i}, \ymeta_{i})\}_{{i}=1}^{\Nmeta}$.
Usually, $M \ll N$ holds, as is common in real-world scenarios.

Let \( f(\x; \theta) \) be a neural network model parameterized by \( \theta \) for classification. 
We assume that our classification model comprises a feature extractor $e(\mathbf{x}; \theta^e)$ and a prediction head $g(\mathbf{z}; \theta^c)$. The extractor $e(\mathbf{x}; \theta^e)$ maps an input $\mathbf{x} \in \mathbb{R}^{d}$ to a feature vector $\mathbf{z} \in \mathbb{R}^{d'}$ (where $d' \ll d$). Subsequently, $g(\mathbf{z}; \theta^c)$, parameterized by $\theta^c$, produces class probabilities via a softmax layer, yielding $f(\mathbf{x}; \theta^e, \theta^c) = g(e(\mathbf{x}; \theta^e); \theta^c)$ to approximate $p(y | \mathbf{x})$. 
We also augment this model with an auxiliary "noisy" prediction head $g_n(\mathbf{z}; \theta^n)$ as detailed in \cref{sec:noisy_posterior_simulation}. This noisy head shares the feature extractor $e(\mathbf{x}; \theta^e)$ and possesses an identical architecture to $g(\cdot; \theta^c)$, yielding $f_n(\mathbf{x}; \theta^e, \theta^n) = g_n(e(\mathbf{x}; \theta^e); \theta^n)$ to approximate $p(\ytilde | \mathbf{x})$. The complete model parameters are thus $\theta = \{\theta^e, \theta^c, \theta^n\}$. 
Our goal is to train this augmented model to achieve strong generalization on clean data with $f(\mathbf{x}; \theta^e, \theta^c)$, even when primarily learning from noisy labels.

\newcounter{proprep}
\setcounter{proprep}{0}

\section{Methods}
\label{sec:methods}

Our method is an iterative self-training framework that jointly optimizes a \textit{primary classifier} and a \textit{label correction mechanism}. This creates a synergistic loop: the correction mechanism, learned on the clean dataset $\Dmeta$, refines noisy labels $\ytilde$ from $\Dtrain$ into cleaner estimates $\yhat$; the main classifier $f$ then trains on these $\yhat$; an improved classifier subsequently helps the correction mechanism produce better $\yhat$ for the next iteration. 

We summarize our process in \cref{fig:method}. Within each iteration with timestamp $t$, key operations include: (a) updating the classifier model parameters $(\theta^e_t, \theta^c_t, \theta^n_t)$ using $\yhat_t$ (for $f$) and $\ytilde$ (for $f_n$); (b) collecting features with $e(\cdot, \theta^e_t)$ on $\Dtrain$ and  $\Dmeta$, simulating noisy posteriors via $f_n$ ; (c) optimizing the label correction mechanism using $\Dmeta$ along with its simulated noisy posteriors and collected features; (d) finally, generating new corrected labels $\yhat_{t+1}$ for samples in $\Dtrain$ to guide the subsequent classifier training.

\subsection{Noisy Posterior Simulation}
\label{sec:noisy_posterior_simulation}

To learn how to correct noisy labels back to clean ones, we leverage the clean dataset $\Dmeta$ where ground truth is available. However, $\Dmeta$ contains only clean labels without corresponding noisy versions, presenting a fundamental challenge: \textbf{how can we train a correction function without noisy-clean label pairs?}
To address this issue, we introduce noisy posterior simulation to generate synthetic noisy labels for the clean dataset. We train a ``noisy simulation'' on $\Dtrain$ to approximate $p(\ytilde|x)$ and then apply it to $\Dmeta$ to estimate $p(\ytilde^m|\xmeta)$. This approach builds a bridge between noisy and clean label distributions, enabling the knowledge from $\Dtrain$ to correct labels in $\Dtrain$ despite working with entirely different instances.

We implement this simulation using a two-head architecture similar to \cite{fasten} for efficient training. The noisy classifier shares the feature extractor $e(\cdot; \theta^e)$ with $f$, but uses a different prediction head which is parameterized with $\theta^n$. This forms a noisy prediction model $f_n(x; \theta) = g_n(e(x; \theta^e); \theta^n)$. 
The noisy classifier is explicitly trained to predict the observed noisy labels in $\Dtrain$. This is accomplished by jointly optimizing $\theta_t$ at iteration $t$ using the composite loss function:
\begin{equation}
\label{eq:framework_train_f}
\theta_t = \argmin_{\theta} \frac{1}{\Ntrain} \sum_{i=1}^{\Ntrain} \Big( \Loss(f(x_i; \theta), \hat{y}_{t,i}) + \lambda \Loss(f_n(x_i; \theta), \tilde{y}_{i}) \Big),
\end{equation}
where $\lambda$ is a hyperparameter to balance these objectives and prevent overfitting to noise.
The first term directs the clean classifier $f$ towards the corrected clean labels, while the second term compels the noisy classifier $f_n$ to model the characteristics of the actual noisy labels.

After updating $\theta_t$, the simulated noisy posterior for each clean sample $\xmeta \in \Dmeta$ is generated as $\ytildehatmeta = f_n(\xmeta; \theta_t)$. This simulated posterior $\ytildehatmeta$ serves as the indispensable noise-informative input during the process on the clean dataset in the following sections.

\subsection{Robust Closed-loop Label Correction}
\paragraph{Neural Label Correction}
\label{sec:neural_correction}

Traditional transition-matrix-based approaches struggle to model complex, instance-dependent label noise and often suffer from high estimation variance. Neural networks, in contrast, offer greater expressive capacity and lower variance, making them better suited for this task. We therefore propose a neural correction module, $r$, parameterized by $\phi$, to leverage these advantages.

A key design choice for this module is its use of intermediate feature representations. This is inspired by the early learning phenomenon - where networks capture clean patterns before memorizing noise, a principle underlying pseudo-labeling. We posit that these intermediate features retain more stable and generalizable semantics than final-layer logits. This richer information is hypothesized to enable the correction function $r$ to better infer potential sources of classification error.
Based on this rationale, the module $r$ is designed to take two inputs: (i) the noisy label posterior ($\tilde{y}$ for $\mathcal{D}_{\text{train}}$ or simulated $\ytildehatmeta$ for $\Dmeta$) and (ii) the aforementioned intermediate feature representation ($z$ or $\zmeta$). The correction network processes an input of dimension $c + d_f$, where $d_f$ is the feature dimensionality, and outputs a $c$-dimensional corrected label distribution $\bar{y} = r(\tilde{y}, z; \phi)$ via a softmax layer.

Despite this, training $r$ on the typically small meta set $\Dmeta$ presents a significant overfitting risk. To mitigate this, $r$ is intentionally implemented as a shallow network (e.g., a two-layer MLP in our experiment). To further address this, we partition $\Dmeta$ into a training set $\mathcal{D}_{\text{train\_m}}$ and a validation set $\mathcal{D}_{\text{val\_m}}$. The parameters $\phi$ are optimized by minimizing a meta-loss on $\mathcal{D}_{\text{train\_m}}$, and the final parameters $\phi^*$ are selected based on performance on the held-out validation set $\mathcal{D}_{\text{val\_m}}$, underpinning the robust label correction process.

\paragraph{Robust Convex Combination}
\label{sec:convex_combination}

We define $\ybar_t = r_t(\ytilde, \z_t; \phi_t^*)$ as the prediction of the neural correction network $r_t$ at iteration $t$, and $\ybarmeta_t = r_t(\ytildehatmeta, \zmeta_t; \phi_t^*)$ as its prediction on the clean validation set.
Despite careful training and validation, the neural correction network $r_t(\cdot; \phi_t^*)$ optimized at any given iteration $t$ might still produce suboptimal predictions $\ybar_t$. Relying solely on the latest prediction $\ybar_t$ could be risky: \textbf{if $\ybar_t$ is flawed, using it directly as the target label $\yhat_t$ could amplify errors and destabilize the classifier training.}

To enhance robustness, we form an ensemble of historical predictions. We define the original noisy label as the 0-th correction, $\ybar_0 \equiv \ytilde$. The corrected label after iteration $t$ is then a convex combination of all predictions $\{\ybar_k\}_{k=0}^t$, weighted by a vector $\omega_t \in \Delta^t$, where $\Delta^t$ is the $t$-simplex (i.e., $\sum_{k=0}^t \omega_{t,k} = 1$ and $\omega_{t,k} \geq 0$):
\begin{equation} \label{eq:full_convex_combo}
\yhat_{t+1} = \sum_{k=0}^t \omega_{t,k} \ybar_k.
\end{equation}
\begin{proposition} \label{prop:non_increasing_risk}
Let $\hat{y}_K^* = \sum_{k=0}^K \omega_k^* \bar{y}_k$ for $K \geq 0$ (with $\ybar_0 \equiv \ytilde$), where the weight vector $\omega^* \in \Delta^K$ minimizes the expected risk $\mathcal{R}(\hat{y}_K^*)$. Then:
\begin{align}
\mathcal{R}(\hat{y}_K^*) &\leq \min_{0 \leq k \leq K} \mathcal{R}(\bar{y}_k), \\
\mathcal{R}(\hat{y}_{K}^*) &\leq \mathcal{R}(\hat{y}_{K-1}^*) \quad \text{for all} \quad K \geq 1.
\end{align}
That is, the expected risk $\mathcal{R}(\hat{y}_K^*)$ is bounded by the minimum risk of any individual component (including the original noisy label $\ybar_0$) and does not increase as $K$ increases.
\end{proposition}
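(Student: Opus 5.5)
The argument is a feasible-set inclusion: $\hat{y}_K^*$ minimizes $\mathcal{R}$ over all convex combinations of $\{\bar{y}_k\}_{k=0}^K$, and enlarging the set of available components can only lower the minimum. No convexity of the risk is needed, only that the minimum over the simplex is attained. We make the optimization explicit by writing
\begin{equation*}
\hat{y}(\omega) = \sum_{k=0}^K \omega_k \bar{y}_k, \qquad \omega^* \in \argmin_{\omega \in \Delta^K} \mathcal{R}(\hat{y}(\omega)).
\end{equation*}
Existence of $\omega^*$ is assumed in the statement. It also follows whenever $\omega \mapsto \mathcal{R}(\hat{y}(\omega))$ is continuous, for example for cross-entropy composed with a trained network and expectations that are finite, because $\Delta^K$ is compact.

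\textbf{First inequality.} For each $j \in \{0,\dots,K\}$, the vertex $e_j$ of the simplex lies in $\Delta^K$, and $\hat{y}(e_j) = \bar{y}_j$. Optimality of $\omega^*$ gives
\begin{equation*}
\mathcal{R}(\hat{y}_K^*) \le \mathcal{R}(\bar{y}_j).
\end{equation*}
Taking the minimum over $j$ gives the first claim. The case $j=0$ in particular covers the original noisy label $\tilde{y}$.

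\textbf{Second inequality.} Let $\omega' \in \Delta^{K-1}$ be the optimizer for $K-1$. Pad it with a zero to obtain $\omega'' = (\omega'_0,\dots,\omega'_{K-1},0)$. Then $\omega'' \in \Delta^K$, since it is nonnegative and sums to one. Moreover $\hat{y}(\omega'') = \hat{y}_{K-1}^*$, because the new component $\bar{y}_K$ receives weight zero. Optimality of $\omega^*$ over $\Delta^K$ then yields
\begin{equation*}
\mathcal{R}(\hat{y}_K^*) \le \mathcal{R}(\hat{y}_{K-1}^*).
\end{equation*}

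\textbf{Main obstacle.} The delicate point is not the algebra but the interpretation. The risk $\mathcal{R}$ has to be a fixed functional of the target label assignment, evaluated with the same data distribution at every $K$. The earlier components $\bar{y}_0,\dots,\bar{y}_{K-1}$ must also be unchanged when $\bar{y}_K$ is added, so that the padded vector reproduces exactly $\hat{y}_{K-1}^*$. I would state these as standing assumptions, namely that the historical predictions are frozen once produced and that $\mathcal{R}$ does not depend on $K$. With those assumptions the proof is just the two inclusions $\{e_j\} \subset \Delta^K$ and $\Delta^{K-1} \hookrightarrow \Delta^K$. Finally, I would note that monotonicity together with the lower bound $\mathcal{R} \ge 0$ implies that the sequence $\mathcal{R}(\hat{y}_K^*)$ converges.
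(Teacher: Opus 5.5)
Your proof is correct and takes essentially the same route as the paper: the simplex vertices $e_j$ give the first inequality, and padding the $(K-1)$-optimal weights with a zero gives the second. The paper's proof leaves implicit the standing assumptions you state (a fixed risk functional and frozen historical components $\bar{y}_0,\dots,\bar{y}_{K-1}$), and it likewise relies only on the inclusions of the feasible sets and on optimality.
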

These properties are vital to prevent error amplification, especially when individual corrected predictions $\bar{y}_k$ might be suboptimal. 
They adaptively down-weights unreliable $\bar{y}_k$ while enabling uncertain predictions (e.g., even near-uniform) to induce beneficial label smoothing. This yields a stable, low-variance training signal  and implicit regularization, ensuring robustness even under suboptimal corrections.

Since the correction network has already overfitted to $\Dtrainm$, we approximate $\omega_{t}^*$ by minimizing the meta-loss on $\Dvalm$. We similarly define $\ybarmeta_0 \equiv \ytildehatmeta$ for the meta set:
\begin{equation} \label{eq:weight_optimization}
\omega_{t}^* = \argmin_{\omega \in \Delta^t} \frac{1}{|\Dvalm|} \sum_{i \in \Dvalm} \Loss \left( \sum_{k=0}^t \omega_{t,k} \ybarmeta_{k,i}, \ymeta_i \right).
\end{equation}

Due to the low dimensionality of $\omega$ and the averaging effect of convex combination, overfitting is mitigated even with small $\Dvalm$, producing high-quality corrected labels $\yhat_{t+1}$ for classifier training in the subsequent iteration.

\begingroup
\begin{algorithm}[tbp]
    \caption{Self-Training with Closed-loop Label Correction Algorithm}
    \label{algo:classification}
    \begin{algorithmic}[1]
    \setlength{\abovedisplayskip}{2pt}
    \setlength{\belowdisplayskip}{3pt}
    \setlength{\jot}{1pt}
        \Require Noisy training set $\Dtrain = \{(\x_i, \ytilde_i)\}_{i=1}^{\Ntrain}$; clean set $\Dmeta = \{(\xmeta_i, \ymeta_i)\}_{i=1}^{\Nmeta}$.
        \State Set $t \gets 0$; Initialize $\hat{y}_{0,i} \gets \ytilde_i$ for $i=1, \ldots, \Ntrain$.
        \State Split $\Dmeta$ into $\Dtrainm$ and $\Dvalm$.
        \Repeat
            \State Update $\theta_t$ using corrected labels $\yhat_{t}$ and noisy labels $\ytilde$ via \cref{eq:framework_train_f} for several epochs.
            \State Extract the features on $\Dtrain$ and $\Dmeta$ and simulate noisy posteriors on $\Dmeta$.
            \State Define base corrections: $\ybar_0 \gets \ytilde$ and $\ybarmeta_0 \gets \ytildehatmeta$.
            \State Initialize and optimize correction network $r_t(\cdot; \phi_t)$ on $\Dtrainm$, validate on $\Dvalm$ to get $\phi_t^*$.
            \State Generate corrections with $r_t(\cdot; \phi_t^*)$ and optimize weights $\omega_t^*$ on $\Dvalm$ via \cref{eq:weight_optimization}.
            \State Update corrected labels $\hat{y}_{t+1}$ for $\Dtrain$ using $r_t(\cdot; \phi_t^*), \omega_{t}^*$ via \cref{eq:full_convex_combo}.
            \State Set $t \gets t + 1$.
        \Until{Stopping criterion met}
    \end{algorithmic}
\end{algorithm}
\endgroup

\subsection{Algorithm Details}
\label{sec:algorithm_details}
The overall iterative self-training process is detailed in \cref{algo:classification}. The algorithm begins with initialization and then iteratively updates the classifier parameters, extracts features, simulates noisy posteriors, and optimizes the correction network. In each iteration, it generates new predictions using the correction network, computes optimal combination weights, and updates the corrected labels for the next round. This process continues until a stopping criterion is met, progressively improving both the classifier and the quality of corrected labels. 

The computational cost associated with optimizing the correction network $r$ and the weights $\omega$ remains minimal. This is primarily due to the typically small size of the clean dataset $\Dmeta$, the shallow architecture of $r$ (e.g., a 2-layer MLP), and the significantly lower dimensionality of feature embeddings compared to the raw input data. In our CIFAR experiments, these optimization steps constituted only about 1\% of the total training time, ensuring that our method does not introduce a significant computational overhead.

\paragraph{Iterative Improvement Mechanism}
The efficacy of our method is significantly driven by its iterative improvement mechanism, which functions based on three interconnected principles:

\begin{itemize}
    \item Our mechanism establishes a closed-loop feedback system. The clean data provide essential ground-truth anchors, enabling the transfer of true label information to the noisy training data via the learned correction function and preventing error amplification.
    \item Our approach leverages the widely recognized early learning phenomenon that neural networks typically learn cleaner, more generalizable patterns from the data before beginning to memorize noisy instances, providing a strong foundation for the label correction module.
    \item A virtuous cycle emerges through iterative refinement, where increasingly accurate labels enhance the classifier’s feature representations, which in turn improve the label correction function. \cref{prop:non_increasing_risk} guarantees that this cycle maintains or improves performance,  ensuring the algorithm's stability.
\end{itemize}

\newcommand{\cmark}{\ding{51}}%
\newcommand{\xmark}{\ding{55}}%
\newcommand{\smark}{\ding{70}}%

\section{Experiments}
\label{sec:experiments}

\subsection{Experiments on CIFARs} 
\label{sec:experiments:dataset}
We first evaluate our method on CIFAR-10 and CIFAR-100  \cite{cifar}, which is widely used for machine learning research.
Both datasets contain 50,000 training and 10,000 test images. We randomly select 5,000 images from the training set to serve as the clean subset. This initial partitioning establishes a 9:1 ratio between the remaining noisy training data and the designated clean subset, which represents a realistic scenario of limited clean annotations. Furthermore, we design another experiment on CIFAR-100 to explore a more extreme challenge, utilizing an even smaller clean set of only 1,000 samples (49:1, using 10 samples per class), as detailed in \cref{sec:cn1000}.

\begin{table*}[t]
\centering
\setlength{\tabcolsep}{6.8pt} 
\renewcommand{\arraystretch}{0.98} 
\caption{Test accuracy (\%) of all competing methods on CIFAR-10 and CIFAR-100 under different noise types with different noise levels. The best results are highlighted in bold.}
\label{tab:main_results}
\resizebox{0.93\textwidth}{!}{%
\begin{tabular}{l@{\hspace{0.5em}}l@{\hspace{0.5em}}c|ccc|ccc|ccc}
\toprule
\multirow{3}{0.085 \textwidth}[0.6ex]{\flushright Dataset} & \multicolumn{2}{c|}{\multirow{3}{0.085 \textwidth}[0.6ex]{\flushright Method}} & \multicolumn{3}{c|}{Symmetric} & \multicolumn{3}{c|}{Asymmetric} & \multicolumn{3}{c}{Instance-dependent} \\
\cmidrule{4-12}
& & & \multicolumn{3}{c|}{$\eta$} & \multicolumn{3}{c|}{$\eta$} & \multicolumn{3}{c}{$\eta$} \\
\cmidrule{4-12}
& & & 0.2 & 0.4 & 0.6 & 0.2 & 0.4 & 0.6 & 0.2 & 0.4 & 0.6 \\
\midrule
\multirow{12}{*}[-2ex]{CIFAR-10} & \multirow{2}{*}{Cross-Entropy } & Best & 91.13 & 87.93 & 82.54 & 92.87 & 90.00 & 71.99 & 90.75 & 82.74 & 68.52 \\
& & Last & 82.84 & 71.30 & 59.93 & 88.45 & 79.92 & 70.19 & 78.15 & 54.54 & 39.72 \\
\cmidrule{2-12}
& \multirow{2}{*}{FasTEN} & Best & 91.46 & 88.42 & 82.88 & 92.87 & \textbf{92.68} & 91.39 & 91.19 & 85.44 & 77.09 \\
& & Last & 86.19 & 77.62 & 70.04 & 91.63 & 88.82 & 85.71 & 86.82 & 81.16 & 61.88 \\
\cmidrule{2-12}
& \multirow{2}{*}{MW-Net} & Best & 90.55 & 88.02 & 83.45 & 93.03 & 91.29 & 65.42 & 90.66 & 82.30 & 63.83 \\
& & Last & 84.86 & 72.58 & 57.04 & 90.49 & 87.21 & 61.65 & 79.29 & 66.94 & 52.41 \\
\cmidrule{2-12}
& \multirow{2}{*}{L2B} & Best & 89.15 & 86.27 & 80.38 & 91.37 & 90.56 & 89.66 & 89.67 & 85.98 & 74.23 \\
& & Last & 88.83 & 86.27 & 80.23 & 91.23 & 90.51 & 89.23 & 89.46 & 85.79 & 74.23 \\
\cmidrule{2-12}
& \multirow{2}{*}{EMLC} & Best & 90.77 & 87.84 & 82.00 & 91.67 & 90.92 & 73.08 & 83.63 & 79.35 & 72.35 \\
& & Last & 81.60 & 71.24 & 58.55 & 88.51 & 80.95 & 68.11 & 80.32 & 74.09 & 61.57 \\
\cmidrule{2-12}
& \multirow{2}{*}{{Ours}} & Best & \textbf{92.48} & \textbf{89.28} & \textbf{85.50} & \textbf{93.88} & 92.65 & \textbf{93.07} & \textbf{92.83} & \textbf{91.37} & \textbf{85.76} \\
& & Last & \textbf{92.12} & \textbf{88.73} & \textbf{84.61} & \textbf{93.70} & \textbf{92.61} & \textbf{92.97} & \textbf{92.69} & \textbf{91.09} & \textbf{85.24} \\
\midrule
\multirow{12}{*}[-2ex]{CIFAR-100} & \multirow{2}{*}{Cross-Entropy } & Best & 67.59 & 59.99 & 49.94 & 69.69 & 65.33 & 55.96 & 62.57 & 45.90 & 24.05 \\
& & Last & 62.93 & 45.80 & 26.81 & 62.80 & 46.62 & 30.75 & 60.01 & 40.09 & 20.88 \\
\cmidrule{2-12}
& \multirow{2}{*}{FasTEN} & Best & 68.09 & 62.27 & 50.42 & 67.64 & 53.42 & 31.50 & 66.57 & 59.29 & 47.39 \\
& & Last & 59.37 & 43.65 & 26.09 & 61.91 & 43.27 & 31.50 & 59.70 & 45.79 & 33.15 \\
\cmidrule{2-12}
& \multirow{2}{*}{MW-Net} & Best & 67.94 & 62.26 & \textbf{54.24} & 70.04 & 65.26 & 57.23 & 62.87 & 48.31 & 26.53 \\
& & Last & 62.38 & 48.59 & 29.55 & 62.87 & 47.82 & 31.95 & 59.25 & 40.91 & 22.75 \\
\cmidrule{2-12}
& \multirow{2}{*}{L2B} & Best & 64.57 & 54.54 & 20.66 & 64.91 & 58.42 & 43.33 & 66.27 & 59.55 & 47.45 \\
& & Last & 64.28 & 54.38 & 20.66 & 64.66 & 58.11 & 43.33 & 66.05 & 59.55 & 47.45 \\
\cmidrule{2-12}
& \multirow{2}{*}{EMLC} & Best & 68.67 & 62.49 & 46.90 & 70.10 & 65.06 & 57.15 & 63.82 & 53.99 & 32.11 \\
& & Last & 62.80 & 46.51 & 31.74 & 62.94 & 47.09 & 32.34 & 61.94 & 49.34 & 28.44 \\
\cmidrule{2-12}
& \multirow{2}{*}{{Ours}} & Best & \textbf{68.73} & \textbf{63.00} & 50.94 & \textbf{70.18} & \textbf{67.21} & \textbf{63.80} & \textbf{70.69} & \textbf{63.72} & \textbf{55.08} \\
& & Last & \textbf{68.47} & \textbf{62.36} & \textbf{50.74} & \textbf{69.76} & \textbf{66.76} & \textbf{63.22} & \textbf{70.64} & \textbf{63.58} & \textbf{54.97} \\
\bottomrule
\end{tabular}
}
\end{table*}

\paragraph{Noise settings}
Let $\eta$ denote the corruption ratio. We first simulate two types of instance-independent noise.
\textbf{Symmetric}: Generated by flipping the labels of a given proportion of training samples to all the other class labels uniformly.
Each instance with a true label $y$ is corrupted to all possible classes with probability $\frac{\eta }{c}$.
\textbf{Asymmetric}: Generated by flipping the labels of a given proportion to one or few particular classes.
For CIFAR-10, we disturb the label in its similar class with probability $\eta $ following \cite{safeguard}.
For CIFAR-100, we disturb the label to other classes in its super-class as described in \cite{glc}.
Noise transition matrices are generated first, and then labels are randomly corrupted based on the matrix for each instance-independent label noise setting.
Additionally, we test an \textbf{Instance-dependent} noise setting, simulated as described in \cite{instance-dependent}, applied to both datasets.
We vary $\eta $ in the range of $[0, 1]$ to simulate different noise levels for all noise types.

\paragraph{Baselines}
We compare our methods with a series of advanced methods in noisy label learning that use clean data. The compared methods include:
\textit{Transition-Matrix-Based Methods}:
\textbf{FasTEN.} \cite{fasten} introduce a two-head architecture to efficiently estimate the label transition matrix every iteration within a single back-propagation.
\textit{Meta-Learning-Based Methods}:
\textbf{MW-net.} \cite{mwnet} 
 uses an MLP net and meta-learns the weighting function.
\textbf{L2B.}  \cite{l2b} uses a bootstrapping method with meta-learned weighting coefficient.
\textbf{EMLC.} \cite{emlc} proposes a teacher-and-student architecture and meta-learns the taught soft-label.
We exclude label selection methods due to their high clean data requirements and low noise utilization.
For fair comparison, we disable EMLC's self-supervised initialization to focus comparisons on the core noise-handling mechanisms.

\paragraph{Implementations}
\label{sec:implementations}
In our experiments on CIFAR-10 and CIFAR-100, we employed ResNet-34 as the backbone architecture. 
The models were trained using the stochastic gradient descent (SGD) optimizer with momentum 0.9 and weight decay $5e^{-4}$ for 100 epochs, and the batch size was set to 128. We set the initial learning rate to 0.1 with a step decay scheduler that reduces the learning rate by a factor of 0.1 at epochs 60 and 80.
A full description of the experimental setup can be found in \cref{tab:experiment-details}.

For the specific settings of our method, we make the loss balancing factor $\lambda = 0.5$ in \cref{eq:framework_train_f} for simplicity. We divide \(\Dmeta\) into training \(\Dtrainm\)  and validation subsets \(\Dvalm\) using a ratio \(0.8:0.2\). The neural network correction model is a 2-layer MLP with 256 hidden units, ReLU activation, and a softmax output layer.
We use SGD with learning rate 0.001 (reduced to 0.0001 after first validation decline) on \(\Dtrainm\) to optimize the $\phi$, while we employ sequential least squares programming to optimize \(\omega\) on \(\Dvalm\). 
We update the features, correction function, and update the corrected labels every 5 epochs, after the initial 40-epoch warm-up, which is intended to allow for an adequately initialized noisy posterior. 
For a comprehensive analysis of our method's sensitivity to hyper-parameters, we present ablation studies in \cref{app:time_breakdown_cifar}, examining both the impact of varying $\lambda$ values and the effect of different correction function update frequencies.

\paragraph{Results} \label{sec:cifar-main-results}
The comparative performance analysis is presented in \cref{tab:main_results}, with demonstrative training dynamics for all methods on CIFAR-100 under instance-dependent noise at noise rate 0.4, illustrated in \cref{fig:cifar-dynamic}. Generally, our method consistently outperforms other approaches across all noise scenarios, demonstrating strong robustness to label noise. Notably, while other approaches show substantial performance degradation in later training epochs (likely attributable to overfitting on noisy patterns), our approach maintains stable performance trajectories without significant accuracy decline. This stability empirically validates that our correction mechanism effectively leverages the small clean dataset to guide the learning process, preventing memorization of noisy patterns.

\begin{figure*}[t]
    \begin{minipage}[t]{0.48\textwidth}
        \centering
        \includegraphics[width=\textwidth]{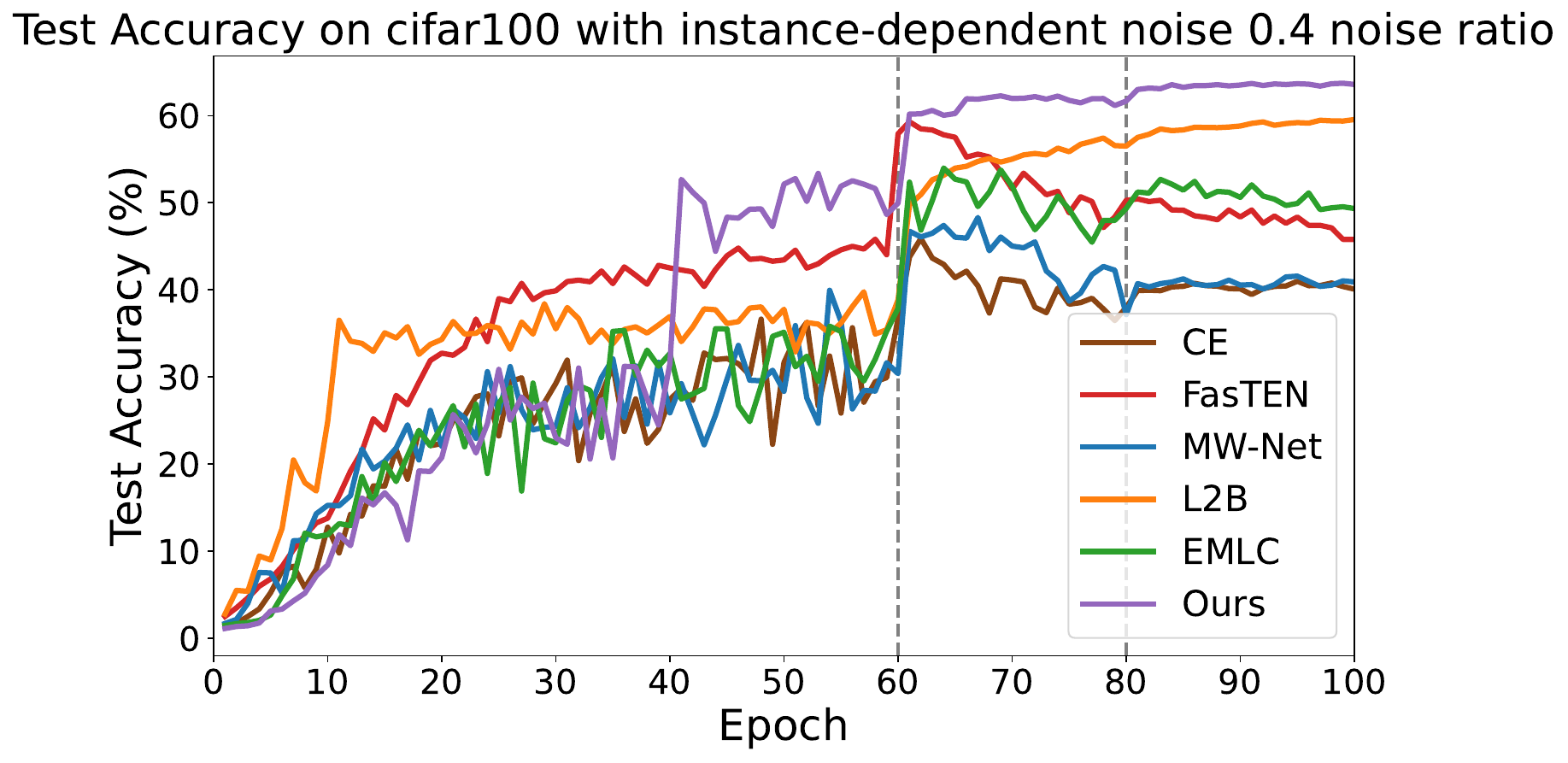}
        \caption{Training dynamics of different methods on CIFAR-100 under instance-dependent noise with a level of 0.4. The vertical dashed lines represent the epochs at which the learning rate decays at 60 and 80 epochs.}
        \label{fig:cifar-dynamic}
    \end{minipage}
    \hfill
    \begin{minipage}[t]{0.48\textwidth}
        \centering
        \includegraphics[width=\textwidth]{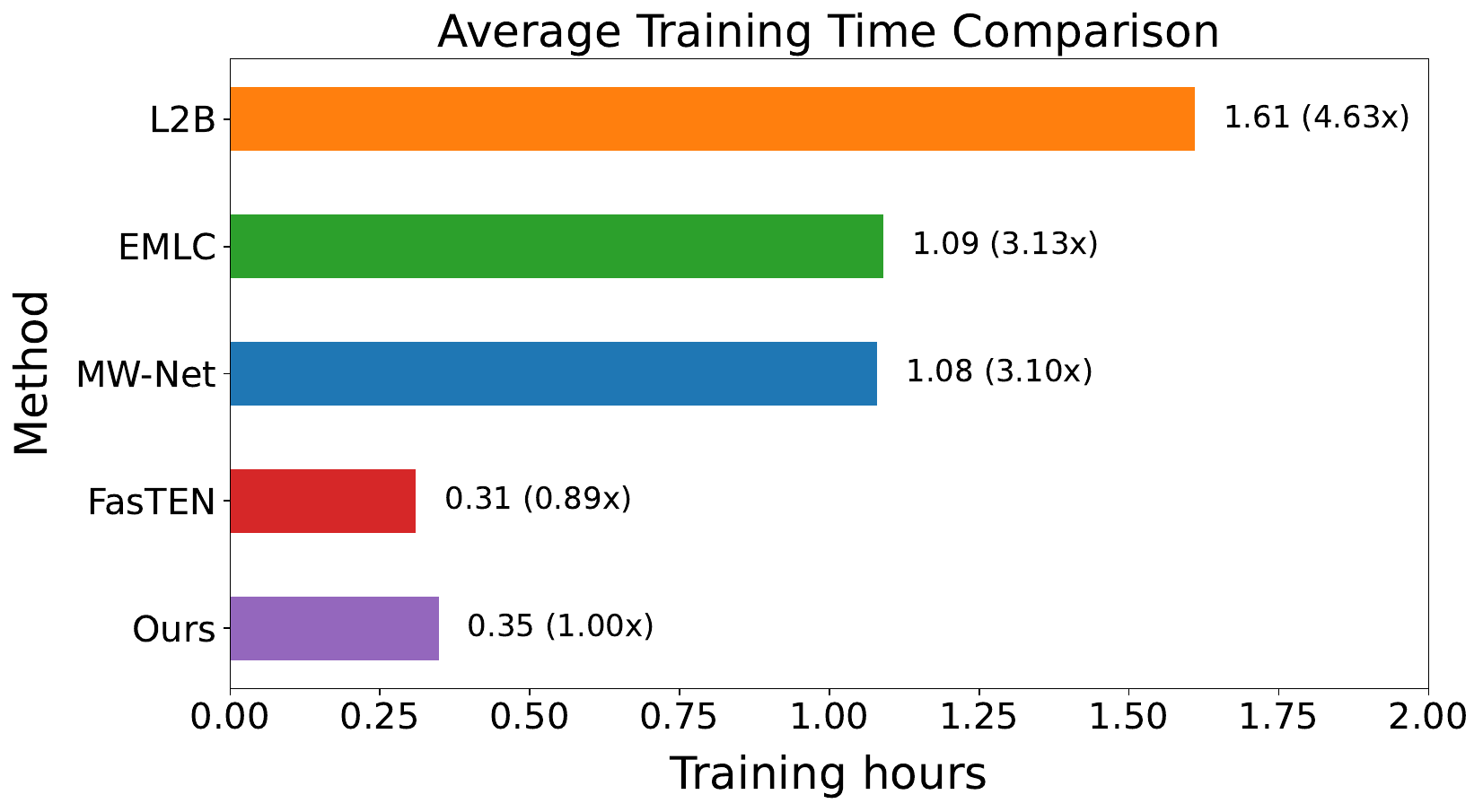}
        \caption{Comparison of average GPU training time on CIFAR-100 (in hours) for various methods using a single RTX 4090. The values in parentheses indicate the relative ratio of training time compared to our method. }
        \label{fig:cifar-time}
    \end{minipage}
\end{figure*}

\begin{table*}[t]
\centering
\small 
\setlength{\tabcolsep}{6.8pt} 
\renewcommand{\arraystretch}{0.98} 
\caption{Test accuracy (\%) of our method and its ablations on CIFAR-100. Setting (a) uses 5,000 clean samples (50 samples per class), and Setting (b) uses 1,000 clean samples (10 samples per class). The last recorded performances in the training procedure are reported.}
\label{tab:cifar-ablations}
\resizebox{0.93\textwidth}{!}{%
\begin{tabular}{c|c|ccc|ccc|ccc}
\toprule
\multirow{3}{*}[0.6ex]{Setting} & \multirow{3}{*}[0.6ex]{Ablation} & \multicolumn{3}{c|}{Symmetric} & \multicolumn{3}{c|}{Asymmetric} & \multicolumn{3}{c}{Instance-dependent} \\
\cmidrule(lr){3-5} \cmidrule(lr){6-8} \cmidrule(lr){9-11}
& & \multicolumn{3}{c|}{$\eta$} & \multicolumn{3}{c|}{$\eta$} & \multicolumn{3}{c}{$\eta$} \\
\cmidrule(lr){3-5} \cmidrule(lr){6-8} \cmidrule(lr){9-11}
& & 0.2 & 0.4 & 0.6 & 0.2 & 0.4 & 0.6 & 0.2 & 0.4 & 0.6 \\
\midrule
\multirow{4}{*}[1ex]{(a)} & Final-Layer Logits & 66.76 & 60.00 & 45.47 & 67.41 & 62.13 & 54.72 & 67.98 & 59.41 & 46.65 \\
& Pseudo-Soft-Labels & 64.81 & 51.70 & 28.32 & 64.60 & 51.61 & 32.51 & 66.81 & 58.91 & 40.71 \\
& No Convex Combination & 66.13 & 59.56 & 47.12 & 68.57 & 65.51 & 60.40 & 69.21 & 62.73 & \textbf{55.15} \\
& Ours & \textbf{68.47} & \textbf{62.36} & \textbf{50.74} & \textbf{69.76} & \textbf{66.76} & \textbf{63.22} & \textbf{70.64} & \textbf{63.58} & 54.97 \\
\midrule
\multirow{2}{*}[0.5ex]{(b)} & No Convex Combination & 55.08 & 43.85 & 21.54 & 58.47 & 50.08 & 40.97 & 59.97 & 48.60 & 41.54 \\
& Ours & \textbf{65.56} & \textbf{58.54} & \textbf{43.50} & \textbf{67.17} & \textbf{60.52} & \textbf{53.62} & \textbf{66.25} & \textbf{55.43} & \textbf{46.51} \\
\bottomrule
\end{tabular}
}
\end{table*}

We have also quantified the training time for our method on a single RTX 4090 GPU. A detailed breakdown of the costs with respect to each step of our algorithm is available in \cref{app:time_breakdown_cifar}.
We compared the complete training time with the baseline methods. These results are presented in \cref{fig:cifar-time}. Our approach secures a leading position in terms of training efficiency, necessitating less time compared to meta-learning-based methods.

\paragraph{Ablation Experiments} \label{sec:cifar-ablations}
We conduct comprehensive ablation studies, presented in \cref{tab:cifar-ablations}, to empirically validate the two central design choices of our framework: (i) the use of \textbf{intermediate features} as the optimal input for the correction network $r$, and (ii) the necessity of the \textbf{robust convex combination} for stable training.

\noindent \textbf{Impact of Correction Network Input Modality.}
Our core hypothesis is that the corrector is best achieved using intermediate features, which we posit contain more stable and richer semantic information than final-layer outputs. To verify this, we test two alternatives: (1) \textit{Final-Layer Logits} (the raw, pre-softmax outputs) and (2) \textit{Pseudo-Soft-Labels} (the final softmax outputs). The results are conclusive. Using intermediate features consistently and significantly outperforms both alternatives across all noise settings. This validates our claim in \cref{sec:neural_correction} that these features provide a superior signal for learning the complex correction mapping. Notably, \textit{Final-Layer Logits} still performs much better than \textit{Pseudo-Soft-Labels}, especially under high noise. This suggests that the softmax operation may attenuate critical signals, making the raw logits a more viable (though still suboptimal) input than the final pseudo-labels.

\noindent \textbf{Efficacy of Robust Convex Combination.}
This mechanism is our primary defense against the error amplification and model collapse that plague traditional self-training. We test its importance by comparing our full method against an ablation, \textit{No Convex Combination}, which naively uses only the single most recent correction $\ybar_t$ as the training target. The results demonstrate that this convex combination is critical for robustness, especially when supervision is scarce.

This effect is most pronounced in the data-scarce regime: without the combination, performance {collapses dramatically. In contrast, our full method remains stable. Even with a larger clean set, the convex combination provides consistent, significant gains. This confirms that our theoretically-grounded strategy of ensembling historical knowledge is essential for mitigating the risk of destabilization from a single, suboptimal correction network, thereby ensuring a stable training process.

\subsection{Experiments on Clothing1M}
Clothing1M \cite{clothing} is a real-world large-scale dataset where image instances come from online shopping websites with noisy labels derived from user tags. For these experiments, we used ResNet-50 pre-trained on ImageNet as our backbone, following previous works. The model was trained on a single RTX 4090 GPU for 15 epochs using SGD optimizer with momentum 0.9, weight decay $5e^{-4}$, and batch size 256. We set the initial learning rate to 0.01 with step decay at epochs 5 and 10. 
The training pipeline included more data augmentations: resized crop, horizontal flip, random color jitter, and random grayscale conversion following previous works.
All label correction parameters remain consistent with previous experiments, maintaining the loss balancing factor $\lambda = 0.5$. We adjusted the correction function update frequency to every 3 epochs due to the dataset's scale. In addition to the baseline methods, we also consider another leading approach DMLP \cite{dmlpdividemix} for comparison, despite it requires additional self-supervised learning procedures.
To investigate the impact of clean data quantity, we also conduct experiments with two other different clean sample set sizes (10\% and 50\%), revealing insights about our method's robustness to varying levels of supervision quality.

\begin{table}[t]
\begin{center}
\centering
\caption{Comparison of test accuracy (\%) on Clothing1M with advanced methods utilizing the clean data (or its subset). The symbol \smark denotes the use of meta-knowledge, \cmark\ denotes that the method is directly trained on clean instances with full forward and back propagation on the classifier parameters, and $\blacklozenge$ denotes methods that are trained with additional self-supervised learning procedures.}
\label{tab:clothing-main}
\begin{tabular}{lcc}
    \toprule
    {Method} & {Extra data} & {Accuracy (\%)} \\
    \midrule
    Cross-entropy & \xmark & 69.21 \\ 
    \midrule
    FasTEN & \cmark & 77.83 \\
    \midrule
    MW-Net & \smark & 73.72 \\
    EMLC$^\blacklozenge$ & \smark & 79.35 \\
    DMLP$^\blacklozenge$ (10\% clean) & \smark & 75.50 \\
    DMLP$^\blacklozenge$ (50\% clean) & \smark & 77.30 \\
    DMLP$^\blacklozenge$ & \smark & 77.31 \\
    \midrule
    Ours (10\% clean) & \smark & 78.17 \\
    Ours (50\% clean) & \smark & 79.92 \\
    Ours & \smark & \textbf{80.23} \\
    \bottomrule
\end{tabular}
\end{center}
\vskip -0.1in
\end{table}

\paragraph{Results}
The comparison results are presented in \cref{tab:clothing-main}. We also illustrate our training dynamics in \cref{app:clothing_dynamic} for clear visualizations .
Our method is categorized as ``\smark'' in the table, because it leverages the clean data as meta-knowledge for label correction and  \textit{is trained from the corrected noisy labels}, differentiating this from \textit{direct training on such clean instances}. Our approach achieves the highest accuracy of 80.23\%, outperforming other leading methods, even when these competing methods incorporate additional self-supervised learning techniques. 
Furthermore, when utilizing only 10\% of the clean data, our approach still surpasses DMLP, demonstrating that our method is capable of effectively utilizing limited clean data without self-supervised learning procedure, achieving competitive performance even with smaller subsets. 
These findings indicate that our method offers a more reliable strategy for learning with noisy labels in real-world, large-scale scenarios.

\paragraph{Training Efficiency Comparison} 
\label{app:clothing_resource_comparison}

\begin{table}[t]
\caption{Comparison of average memory usage and training time per epoch for our method, L2B, and EMLC on Clothing1M under the specified training configurations. EMLC's memory usage denotes the sum from its 2-GPU setup. We calculate our training time as the average (avg) by dividing the total training time by the complete 15 epochs of our experiments.}
\label{tab:clothing-resource}
\begin{center}
\begin{small}
\begin{tabular}{lcc}
\toprule
{Method} & \multicolumn{1}{c}{\makecell{GPU Memory\\ Cost (MiB)}} & \multicolumn{1}{c}{\makecell{Training Time\\per Epoch (hours)}} \\
\midrule
L2B & \multicolumn{1}{c}{24120} & \multicolumn{1}{c}{3.47} \\
EMLC & \multicolumn{1}{c}{22190 + 21813} & \multicolumn{1}{c}{3.00} \\
\midrule
\multirow{2}{*}[0.5ex]{Ours} & \multicolumn{1}{c}{\textbf{16915}\vspace{-2pt}} &  \multicolumn{1}{c}{\textbf{0.55}\vspace{-2pt}} \\
 &  \multicolumn{1}{c}{\vspace{-1pt}\scriptsize{(no additional cost)}} & \multicolumn{1}{c}{\vspace{-1pt}\scriptsize{(avg)}}  \\
\bottomrule
\end{tabular}
\end{small}
\end{center}
\end{table}

In order to validate our potential in practical usage, we test the training efficiency for the advanced methods, mainly the meta-learning-based methods, on Clothing1M. We set up comparative analysis of our method, L2B and EMLC regarding their time cost and performance.
We unify the classifier backbone as ResNet-50. Our method and L2B utilize a single RTX 4090 GPU with 128 mini-batches, whereas EMLC employs a 2-GPU setup, with each GPU processing 64 mini-batches. This configuration ensures a fair basis for comparison.

The training resource comparison is presented in \cref{tab:clothing-resource}.
Notably, our framework demonstrated superior efficiency and less memory, with EMLC consuming more than twice the GPU memory of ours.  
This advantage stems directly from our decoupled optimization of the classifier and the correction modules, highlighting the advantages of our approach in terms of performance, time efficiency, and GPU memory usage, underscoring its potential for practical large-scale applications.
    
\section{Conclusion}
\label{sec:conclusion}

In this paper, we proposed a novel self-training label correction framework to address learning with noisy labels. Our approach enables a classifier and a label correction function to co-evolve, guided by a small clean dataset to prevent error accumulation and enhance correction quality. We introduce noisy posterior simulation to bridge distributions between clean data and noisy labels, and leverage intermediate features for finer-grained label correction. Furthermore, a robust convex combination strategy, theoretically guaranteed for stability, is used for label updates. Empirical experiments on benchmarks like CIFAR and Clothing1M demonstrate that our method significantly outperforms current advanced techniques while also achieving higher training efficiency. This work offers an effective and robust solution for tackling large-scale noisy data with limited clean supervision in practical settings.  

{
    \small
    \bibliographystyle{ieeenat_fullname}
    \bibliography{main}
}

\clearpage
\setcounter{page}{1}
\maketitlesupplementary
\section{Further Details on Our Method}
\label{app:method}
\subsection{Considerations for the Correction Neural Network}
\label{app:correction_network}

Achieving robust generalization with the neural correction network $r$, especially when the guiding clean dataset $\Dmeta$ is of limited size, necessitates careful design of both its input modalities and structural complexity. The characteristics of $\Dmeta$, particularly its size, critically dictate these choices to ensure generalization.

When $\Dmeta$ is small, the primary concern is mitigating overfitting. This is addressed through two main strategies: architectural simplicity and a constrained input space. For the former, $r$ is intentionally implemented as a shallow network (e.g., a 2-layer MLP). For the latter, our primary method utilizes only the most recent feature embedding, $z_t = e(x; \theta_t^e)$, as input alongside the noisy posterior. This design significantly reduces the input dimensionality (to $c + \fdim$) and the network's parameter count, which is critical for effective generalization from a limited meta-set.

An alternative approach, however, is to employ an ensemble of historical feature representations, $\{z_k\}_{k=0}^{K}$. While this substantially increases the input dimensionality (to $c + (K+1) \cdot \fdim$), it could potentially offer greater stability and resistance to fluctuations in the feature extractor's representations across iterations. This strategy is generally more viable when $\Dmeta$ is sufficiently large to learn these more complex relationships without overfitting.

Beyond feature representations, other information modalities can serve as supplementary inputs:
\begin{itemize}
    \item \textbf{Final layer logits:} Raw logits from the primary classifier $f(x; \theta)$ or the noisy classifier $f_n(x; \theta)$ offer a rich, pre-softmax representation.
    \item \textbf{Pseudo-labels:} Labels from earlier training stages or auxiliary models can serve as useful signals, which have been widely used in LNL context.
    \item \textbf{Other Prior Knowledge:} Domain-specific information or encoded prior knowledge relevant to the noise or true label characteristics.
\end{itemize}

A practical methodology is to train multiple correction networks with various input configurations and select the one yielding the highest performance on $\Dvalm$ as the final predictor. To further bolster generalization, especially with a smaller $\Dmeta$, this validation of $r$ on the held-out set $\mathcal{D}_{\text{valm}}$ remains essential for selecting a model $\phi^*$ that performs robustly on unseen clean data.

\subsection{Elaboration on Robust Convex Combination}
\label{app:convex_combination}

The main text introduces a robust convex combination strategy to generate corrected labels $\yhat_{t+1}$. This section reiterates the proposition regarding its risk-non-increasing nature and further discusses its implications.

\setcounter{proposition}{0}

\begin{proposition} \label{prop:non_increasing_risk_app}
Let $\hat{y}_K^* = \sum_{k=0}^K \omega_k^* \bar{y}_k$ for $K \geq 0$ (with $\ybar_0 \equiv \ytilde$), where the weight vector $\omega^* \in \Delta^K$ minimizes the expected risk $\mathcal{R}(\hat{y}_K^*)$. Then:
\begin{align}
\mathcal{R}(\hat{y}_K^*) &\leq \min_{0 \leq k \leq K} \mathcal{R}(\bar{y}_k), \\
\mathcal{R}(\hat{y}_{K}^*) &\leq \mathcal{R}(\hat{y}_{K-1}^*) \quad \text{for all} \quad K \geq 1.
\end{align}
That is, the expected risk $\mathcal{R}(\hat{y}_K^*)$ is bounded by the minimum risk of any individual component (including the original noisy label $\ybar_0$) and does not increase as $K$ increases.
\end{proposition}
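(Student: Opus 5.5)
The plan is a feasibility argument over nested simplices, so no convexity or other structure of $\mathcal{R}$ is needed beyond the existence of a minimizer. I treat $\mathcal{R}$ as a functional on candidate label assignments; for instance, the expected risk of a classifier trained on those labels, or the meta-loss on clean data. The weights $\omega^*$ are defined as a minimizer of $\omega \mapsto \mathcal{R}\bigl(\sum_{k=0}^K \omega_k \bar{y}_k\bigr)$ over $\Delta^K$. Writing $F_K(\omega) := \mathcal{R}\bigl(\sum_{k=0}^K \omega_k \bar{y}_k\bigr)$, the statement says $\mathcal{R}(\hat{y}_K^*) = \min_{\omega\in\Delta^K} F_K(\omega)$, and I will assume this minimum is attained. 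If only an infimum is available, the same argument goes through with $\inf$ in place of $\min$.

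For the first inequality, fix any index $0 \le k \le K$ and take the vertex $e_k \in \Delta^K$, the weight vector with a $1$ in coordinate $k$ and zeros elsewhere. Then $\sum_j (e_k)_j \bar{y}_j = \bar{y}_k$, so $F_K(e_k) = \mathcal{R}(\bar{y}_k)$. Since $\omega^*$ minimizes $F_K$ over the whole simplex, $\mathcal{R}(\hat{y}_K^*) \le F_K(e_k) = \mathcal{R}(\bar{y}_k)$. Taking the minimum over $k$ gives the first claim. The case $k=0$ in particular covers the original noisy label $\bar{y}_0 \equiv \tilde{y}$.

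For the monotonicity claim, let $\omega^{*,K-1}\in\Delta^{K-1}$ be the optimal weights at stage $K-1$. I extend it by a zero to $\omega' = (\omega^{*,K-1}_0,\dots,\omega^{*,K-1}_{K-1},0)$, which lies in $\Delta^K$ because it is nonnegative and its entries sum to $1$. Then $\sum_{k=0}^K \omega'_k\bar{y}_k = \hat{y}_{K-1}^*$, so $F_K(\omega') = \mathcal{R}(\hat{y}_{K-1}^*)$. Minimality of $\omega^{*,K}$ gives $\mathcal{R}(\hat{y}_K^*) \le \mathcal{R}(\hat{y}_{K-1}^*)$.

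The only genuine subtlety is the modelling point, not the algebra. The components $\bar{y}_0,\dots,\bar{y}_{K-1}$ must be the \emph{same} objects at stages $K-1$ and $K$, so that the stage-$(K-1)$ feasible set embeds as the face $\{\omega_K=0\}$ of $\Delta^K$, and $\mathcal{R}$ must be a fixed functional of the combined label that does not change with $K$. I would state these two conditions explicitly as the standing interpretation of the proposition. I would also note that in the algorithm, $\omega^*_t$ from \cref{eq:weight_optimization} only approximates this ideal, empirical-risk-on-$\Dvalm$ minimizer, so the guarantee transfers exactly to the empirical meta-loss and only approximately to the population risk.
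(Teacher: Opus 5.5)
Your proposal is correct and follows the paper's proof: the first inequality comes from evaluating the objective at the simplex vertices $e_k$, and the second from padding the stage-$(K-1)$ optimum with a zero weight to get a feasible point of $\Delta^K$, with both steps using only optimality over the feasible set and no structure of $\mathcal{R}$. The two standing conditions you state explicitly are left implicit in the paper. These are that the components $\bar{y}_0,\dots,\bar{y}_{K-1}$ are the same at stages $K-1$ and $K$, and that $\mathcal{R}$ is a fixed functional.
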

\begin{proof}
We prove both inequalities by examining special cases of the weight optimization problem.
For the first inequality, consider the set of weight vectors $\{\omega^{(j)}\}_{j=0}^K$ where $\omega^{(j)}$ places all weight on the $j$-th component (i.e., $\omega_{j}^{(j)} = 1$ and all other entries are zero). These vectors yield $\hat{y}_K^{(j)} = \bar{y}_j$ for each $j \in \{0, \ldots, K\}$. Since $\omega^*$ is optimal:
\begin{align}
\mathcal{R}(\hat{y}_K^*) \leq \min_{0 \leq k \leq K} \mathcal{R}(\bar{y}_k).
\end{align}

For the second inequality, observe that any feasible solution for the $(K-1)$-component problem can be extended to a feasible solution for the $K$-component problem by appending a zero weight for the new component. Specifically, if $\omega_{K-1}^* = (\omega_{K-1,0}^*, \ldots, \omega_{K-1,K-1}^*)$ is optimal for $K-1$, then $\omega^{(K-1)} = (\omega_{K-1,0}^*, \ldots, \omega_{K-1,K-1}^*, 0)$ is feasible for $K$ and yields $\hat{y}_K^{(K-1)} = \hat{y}_{K-1}^*$. By optimality of $\omega_K^*$:
\begin{align}
\mathcal{R}(\hat{y}_{K}^*) \leq \mathcal{R}(\hat{y}_K^{(K-1)}) = \mathcal{R}(\hat{y}_{K-1}^*).
\end{align}

Both results hold for any risk function $\mathcal{R}$, as they rely only on the structure of the feasible region and the principle of optimality.
\end{proof}

\cref{prop:non_increasing_risk_app} provides crucial theoretical support for the robustness of the proposed label correction mechanism. Its implications are multifaceted:
\begin{enumerate}
    \item \textbf{Bounded Risk:} The first inequality, $\mathcal{R}(\hat{y}_K^*) \leq \min_{0 \leq k \leq K} \mathcal{R}(\bar{y}_k)$, guarantees that the expected risk of the optimally combined corrected label $\hat{y}_K^*$ is no worse than the minimum risk of any individual component, including the original noisy label $\ybar_0 \equiv \ytilde$ or any single prediction $\bar{y}_k$.
    \item \textbf{Monotonically Non-Increasing Risk:} The second inequality, $\mathcal{R}(\hat{y}_{K}^*) \leq \mathcal{R}(\hat{y}_{K-1}^*)$ for $K \geq 1$, demonstrates that as more historical predictions $\bar{y}_k$ are incorporated into the ensemble, the expected risk of the resulting optimally weighted label $\hat{y}_K^*$ does not increase. This is a key factor for the stability of the iterative self-training process.
\end{enumerate}
These properties are vital for preventing error amplification, especially when individual corrected predictions $\bar{y}_k$ might be suboptimal. The optimization of weights $\omega^*$ on $\mathcal{D}_{\text{valm}}$ (as per \cref{eq:weight_optimization}) allows the framework to adaptively assign importance to each component, effectively down-weighting less reliable or even detrimental predictions. It is important to recognize that the strength of this framework lies in its ability to handle diverse qualities of $\bar{y}_k$ components. For instance, even if a particular $\bar{y}_k$ represents a very uncertain or "average" prediction (e.g., a uniform distribution across classes), the optimization of $\omega^*$ will determine its appropriate contribution. If incorporating such a component, despite its inherent uncertainty, aids in regularizing or smoothing the final $\hat{y}_K^*$ relative to other potentially overconfident or erroneous labels (like $\ybar_0 \equiv \tilde{y}$ or other $\bar{y}_j$), it can be assigned a beneficial weight $\omega_k^*$. This means the convex combination, guided by the risk minimization objective of \cref{prop:non_increasing_risk_app}, inherently performs a type of label smoothing. This reduces the variance of the effective training targets by producing a more stable and less noisy signal, thereby contributing to the overall robustness and performance of the learning process. The resulting $\hat{y}_K^*$ thus not only benefits from a bounded and non-increasing risk profile but also from these implicit regularization effects.

\subsection{Computational Considerations for Correction Mechanism Optimization}
A pertinent consideration is the computational cost associated with optimizing the parameters of the label correction mechanism (i.e., $\phi$ for the neural corrector $r$ and $\omega$ for the convex combination, as detailed in \cref{sec:neural_correction} and \cref{sec:convex_combination} of the main paper). Although this introduces an inner optimization loop within the broader self-training iterations, the incurred time overhead is minimal. This efficiency is largely attributable to two key aspects: first, the clean meta-dataset $\Dmeta$, on which these parameters are optimized, is typically substantially smaller than the primary noisy training dataset $\Dtrain$. Second, the correction network $r$ often operates on feature embeddings $z$ (with dimensionality $\fdim$), which are of significantly lower dimensionality than the raw input data $x$ (with dimensionality $d$). Consequently, the optimization steps for $r$ and $\omega$ are computationally inexpensive. As empirically validated in our experiments, our overall approach demonstrates favorable computational efficiency, particularly when contrasted with some meta-learning strategies that may involve more complex and resource-intensive computations, such as those requiring second-order derivatives or extensive unrolled optimization paths.

\subsection{Summary of Methodological Advantages}
\label{app:method_advantages}

Our approach, when compared to existing Learning with Noisy Labels (LNL) methods, particularly those leveraging a small clean dataset, offers several distinct advantages:

\paragraph{Computational Efficiency}
The decoupled optimization strategy employed in our framework circumvents the need for computationally expensive operations, such as the high-order derivative calculations often required by meta-gradient techniques. This design, complemented by feature reuse from the primary classifier and potentially less frequent updates to the correction mechanism, substantially minimizes computational overhead. Consequently, our method exhibits enhanced scalability and is well-suited for deployment in distributed training environments.

\paragraph{Flexibility and Adaptability}
The neural correction network, $r$, is inherently capable of modeling complex, instance-dependent noise distributions, thereby surpassing the limitations of methods reliant on simpler, often global, noise assumptions (e.g., noise transition matrices). This architectural flexibility allows our framework to be potentially adapted to more diverse problem settings, including continuous or multi-label classification tasks where transition matrices are typically less suitable. Furthermore, the training of the correction network $r$ is designed to be lightweight, facilitating its use as a plug-and-play module and readily allowing the incorporation of new domain knowledge for enhanced label refinement.

\paragraph{Robustness}
The synergistic interplay between the learned neural corrector $r$ (designed to offer refined, potentially lower-variance label predictions) and the generalized robust convex combination strategy (as detailed in \cref{sec:convex_combination} of the main paper) fosters stable and reliable learning dynamics. This robustness is not merely heuristic; it is underpinned by theoretical risk guarantees (\cref{prop:non_increasing_risk_app} ) and is continually fortified by the iterative improvement mechanism that is anchored to the ground-truth information within the clean dataset.

\begin{table*}[t]
\caption{Time consumption breakdown (in hours) of our method's different phases on CIFAR datasets using a single RTX 4090.}
\label{tab:cifar-time}
\begin{center}
\renewcommand{\arraystretch}{1.2}
\resizebox{0.8 \textwidth}{!}{%
\begin{tabular}{c|ccccc|c}
\toprule
{Phase} & \makecell{Classifier\\Training} & \makecell{Feature Extraction \\ \& Simulation} & \makecell{Train Neural \\ Label Correction} & \makecell{Train Convex \\ Combination} & \makecell{Noisy Label\\Update} & Total \\
\midrule
{Total Time} & 0.327 & 0.014 & 0.004 & 0.001 & 0.002 & \multirow{2}{*}{{0.348}} \\
{\small(Execution Times)} & {\small(100×)} & {\small(12×)} & {\small(12×)} & {\small(12×)} & {\small(12×)} & \\
\bottomrule
\end{tabular}%
}
\end{center}
\end{table*}

\begin{table*}[t]
\centering
\setlength{\tabcolsep}{6.8pt} 
\renewcommand{\arraystretch}{0.98} 
\caption{Test accuracy (\%) of all competing methods on CIFAR-100 with 1000 clean samples and 49000 noisy samples under different noise types with different noise levels. The best results are highlighted in bold.}
\label{tab:main_results_1000}
\resizebox{0.9\textwidth}{!}{%
\begin{tabular}{l@{\hspace{0.5em}}l@{\hspace{0.5em}}c|ccc|ccc|ccc}
\toprule
\multirow{3}{0.085 \textwidth}[0.6ex]{\flushright Dataset} & \multicolumn{2}{c|}{\multirow{3}{0.085 \textwidth}[0.6ex]{\flushright Method}} & \multicolumn{3}{c|}{Symmetric} & \multicolumn{3}{c|}{Asymmetric} & \multicolumn{3}{c}{Instance-dependent} \\
\cmidrule{4-12}
& & & \multicolumn{3}{c|}{$\eta$} & \multicolumn{3}{c|}{$\eta$} & \multicolumn{3}{c}{$\eta$} \\
\cmidrule{4-12}
& & & 0.2 & 0.4 & 0.6 & 0.2 & 0.4 & 0.6 & 0.2 & 0.4 & 0.6 \\
\midrule
\multirow{12}{*}[-2ex]{CIFAR-100} & \multirow{2}{*}{Cross-Entropy } & Best & 67.96 & 63.70 & 53.92 & 70.57 & 66.48 & 57.17 & 63.09 & 47.32 & 27.30 \\
& & Last & 62.88 & 48.12 & 29.72 & 63.60 & 48.58 & 32.32 & 59.87 & 41.41 & 22.59 \\
\cmidrule{2-12}
& \multirow{2}{*}{FasTEN} & Best & 67.03 & 55.87 & 51.47 & 64.79 & 60.77 & \textbf{60.79} & 66.98 & 57.78 & 34.60 \\
& & Last & 61.56 & 41.95 & 28.71 & 58.10 & 46.09 & 45.32 & 60.85 & 43.07 & 13.03 \\
\cmidrule{2-12}
& \multirow{2}{*}{MW-Net} & Best & \textbf{70.36} & \textbf{63.73} & \textbf{55.46} & \textbf{71.53} & \textbf{67.64} & 58.62 & 65.65 & 49.39 & 27.50 \\
& & Last & 63.42 & 48.50 & 30.55 & 64.30 & 49.91 & 32.16 & 61.38 & 42.06 & 23.38 \\
\cmidrule{2-12}
& \multirow{2}{*}{L2B} & Best & 63.93 & 53.24 & 23.14 & 63.80 & 57.83 & 42.85 & 64.86 & \textbf{58.79} & 45.40 \\
& & Last & 63.74 & 53.08 & 22.98 & 63.56 & 57.53 & 42.83 & 64.64 & \textbf{58.79} & 45.35 \\
\cmidrule{2-12}
& \multirow{2}{*}{{Ours}} & Best & 68.13 & 61.74 & 47.91 & 69.92 & 66.99 & 58.58 & \textbf{67.42} & 56.08 & \textbf{48.09} \\
& & Last & \textbf{65.56} & \textbf{58.54} & \textbf{43.50} & \textbf{67.17} & \textbf{60.52} & \textbf{53.62} & \textbf{66.25} & 55.43 & \textbf{46.51} \\
\bottomrule
\end{tabular}
}
\end{table*}

\section{Further Details on the Experiments}
\label{app:experiments}

We will publicly release our code and implementation details upon acceptance of this paper. The repository will include all experiments conducted on CIFAR and Clothing1M datasets, enabling full reproducibility of our results. In the following sections, we provide additional experimental details, comprehensive analyses, and extended results that complement the main paper.

We detail the specific implementation aspects and hyper-parameters used in our experiments across various datasets. \cref{tab:experiment-details} outlines the key training parameters and settings employed in our experiments.

\subsection{Additional Experiments on CIFARs} 
\label{app:time_breakdown_cifar}
Our approach consists of five main phases: (1) \textbf{Classifier Training}, where we update the model parameters using corrected labels; (2) \textbf{Feature Extraction \& Noisy Posterior Simulation}, which involves extracting feature representations from both datasets and simultaneously generating the simulated noisy posteriors on the clean dataset; (3) \textbf{Neural Correction Optimization}, which includes training and validating the correction network; (4) \textbf{Convex Combination}, where we compute optimal weights for combining noisy labels and correction network predictions; and (5) \textbf{Label Update}, where we apply the computed weights to generate the refined corrected labels for the subsequent training iteration.

In \cref{sec:implementations}, we update the noisy posteriors, features and correction function every 5 epochs, starting after the initial 40 epochs. 
As shown in \cref{tab:cifar-time}, the computational overhead associated with our correction mechanism (phases 2-5) is not significant. This confirms that our method achieves significant performance improvements without introducing substantial computational burden. While the neural correction optimization remains computationally light, the classifier training phase dominates the overall time consumption, which is consistent with standard deep learning training procedures.

\paragraph{Ablations on the Loss Balancing Factor}
\label{app:lambda_ablation_cifar}
In addition, we also conduct experiments to examine how the loss balancing factor \(\lambda\) affects performance. Selecting the appropriate \(\lambda\) in \cref{eq:framework_train_f} is a balancing act: it should enable the classifier to estimate the noisy posterior accurately, while preventing the feature extractor from overfitting to noisy patterns. We vary \(\lambda\) across different values in \(\{0.2, 0.5, 1.0\}\), with the results summarized in \cref{tab:cifar-lambda}.
The table shows that the impact of different \(\lambda\) values is relatively modest, with performance variations typically within 1-2 percentage points across all noise types and levels.

\begin{table*}[t]
\centering
\small 
\setlength{\tabcolsep}{6.8pt} 
\renewcommand{\arraystretch}{0.98} 
\caption{Test accuracy (\%) of our method using different values of loss balancing factor $\lambda$ on CIFAR-100 under different noise types with different noise levels. The last recorded performances in the training procedure are reported.}
\label{tab:cifar-lambda}
\resizebox{0.76\textwidth}{!}{%
\begin{tabular}{c|ccc|ccc|ccc}
\toprule
& \multicolumn{3}{c|}{Symmetric} & \multicolumn{3}{c|}{Asymmetric} & \multicolumn{3}{c}{Instance-dependent} \\
\cmidrule{2-10}
\multirow{-2}{*}{$\lambda$ value} & \multicolumn{3}{c|}{$\eta$} & \multicolumn{3}{c|}{$\eta$} & \multicolumn{3}{c}{$\eta$} \\
\cmidrule{2-10}
& 0.2 & 0.4 & 0.6 & 0.2 & 0.4 & 0.6 & 0.2 & 0.4 & 0.6 \\
\midrule
$0.2$ & \textbf{68.95} & \textbf{63.54} & \textbf{52.69} & \textbf{71.30} & \textbf{68.18} & \textbf{63.78} & 70.01 & 62.84 & \textbf{55.21} \\
$0.5$ & 68.47 & 62.36 & 50.74 & 69.76 & 66.76 & 63.22 & \textbf{70.64} & \textbf{63.58} & 54.97 \\
$1.0$ & 66.35 & 59.30 & 48.74 & 69.43 & 66.41 & 62.61 & 68.83 & 60.83 & 53.96 \\
\bottomrule
\end{tabular}
}
\end{table*}
\begin{table*}[t]
\centering
\small 
\setlength{\tabcolsep}{6.8pt} 
\renewcommand{\arraystretch}{0.98} 
\caption{Test accuracy (\%) of our method using different label correction frequencies on CIFAR-100 under different noise types with different noise levels. The frequency value indicates how often (in epochs) the feature sets and correction function are updated.}
\label{tab:cifar-frequency}
\resizebox{0.8\textwidth}{!}{%
\begin{tabular}{c|ccc|ccc|ccc}
\toprule
& \multicolumn{3}{c|}{Symmetric} & \multicolumn{3}{c|}{Asymmetric} & \multicolumn{3}{c}{Instance-dependent} \\
\cmidrule{2-10}
\multirow{-2}{*}{\makecell{ Label Correction \\ Frequency}} & \multicolumn{3}{c|}{$\eta$} & \multicolumn{3}{c|}{$\eta$} & \multicolumn{3}{c}{$\eta$} \\
\cmidrule{2-10}
& 0.2 & 0.4 & 0.6 & 0.2 & 0.4 & 0.6 & 0.2 & 0.4 & 0.6 \\
\midrule
$5$ & \textbf{68.47} & \textbf{62.36} & \textbf{50.74} & 69.76 & 66.76 & \textbf{63.22} & \textbf{70.64} & \textbf{63.58} & \textbf{54.97} \\
$8$ & 68.09 & 61.82 & 50.20 & \textbf{69.94} & \textbf{67.36} & 62.99 & 69.87 & 62.71 & 53.87 \\
$10$ & 68.09 & 60.94 & 49.36 & 69.76 & 66.36 & 61.66 & 69.10 & 61.68 & 52.99 \\
\bottomrule
\end{tabular}
}
\end{table*}

\paragraph{Ablations on the Label Correction Frequency}
\label{app:frequency_ablation_cifar}
We investigate the impact of label correction frequency on model performance. The correction frequency determines how often we the correction function and the corrected labels during training. As described in \cref{sec:implementations}, our default setting updates every 5 epochs after the initial 40 epochs on CIFAR datasets. We vary this frequency across different values in \(\{5, 8, 10\}\) to examine its influence on performance. The results are summarized in \cref{tab:cifar-frequency}.
.
The experimental results demonstrate that the performance differences are relatively modest, with variations typically within 1-2 percentage points. This suggests that our method maintains robust performance across a range of correction frequencies, though more frequent updates tend to provide incremental benefits, particularly in high-noise scenarios where label quality is more critical to model convergence.

\paragraph{Complete Results Presentation}
\label{app:cifar_results_all}

We present experimental results for various datasets under different noise conditions on CIFAR-10 and CIFAR-100. \cref{fig:cifar10} and \cref{fig:cifar100} show the test accuracy trends across training epochs. The dashed vertical lines at epochs 60 and 80 indicate learning rate drop points.
The figures demonstrate the effectiveness of our method compared to existing approaches across different datasets and noise settings. As shown in the plots, our method consistently achieves higher test accuracy, especially in high noise ratio scenarios (60\%), where the performance gap is more significant. Our method exhibits more stable learning behavior and better convergence properties.

\subsection{Robustness with Scarce Clean Data}
\label{sec:cn1000}

To rigorously evaluate our framework's robustness under the extreme data-scarce scenario introduced in \cref{sec:experiments:dataset}, we conduct a comprehensive study on CIFAR-100 using only 1,000 clean samples (10 per class), establishing a 49:1 noisy-to-clean data ratio. The quantitative results are summarized in Table \ref{tab:main_results_1000}.

We compare our method against the baseline methods. Notably, EMLC is excluded from this comparison as it fails to learn in this setting without the aid of self-supervised pre-training. The result shows that our framework consistently and significantly outperforms the competing baselines across most noise types, with a particularly strong advantage under high noise rates and complex instance-dependent noise. This validates the superior efficiency and robustness of our proposed correction mechanism in leveraging very limited clean supervision.

\subsection{Training Dynamics on Clothing1M} 
\label{app:clothing_dynamic}

\cref{fig:clothing_acc} and \cref{fig:clothing_err} illustrate the performance of our method during the training on Clothing1M. In these figures, we measure the performance on the test data and the clean data in $\Dvalm$ as the correction neural network is not significantly overfitted on this validation subset.
We observe that under the guidance of our iterative label correction framework, the classifier robustly learns from the progressively refined labels. This empirically validates The stable convergence, without the error amplification common in self-training, highlights the effectiveness of our closed-loop mechanism.

\begin{figure*}[t]
    \begin{minipage}[t]{0.48\textwidth}
        \centering
        \includegraphics[width=\textwidth]{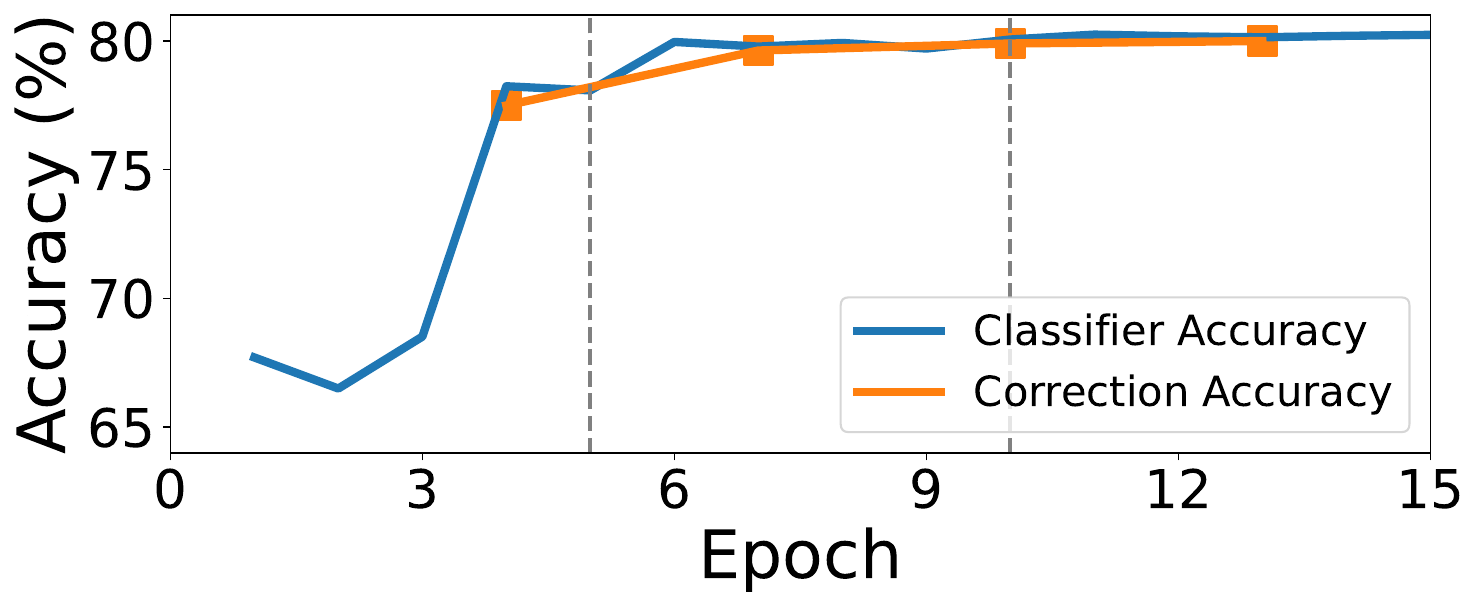} 
        \caption{Classification accuracy on test data and clean meta data during training on Clothing1M. The vertical dashed lines represent the epochs at which the learning rate decays at 5 and 10 epoch.}
        \label{fig:clothing_acc}
    \end{minipage}
    \hfill
    \begin{minipage}[t]{0.48\textwidth}
        \centering
        \includegraphics[width=\textwidth]{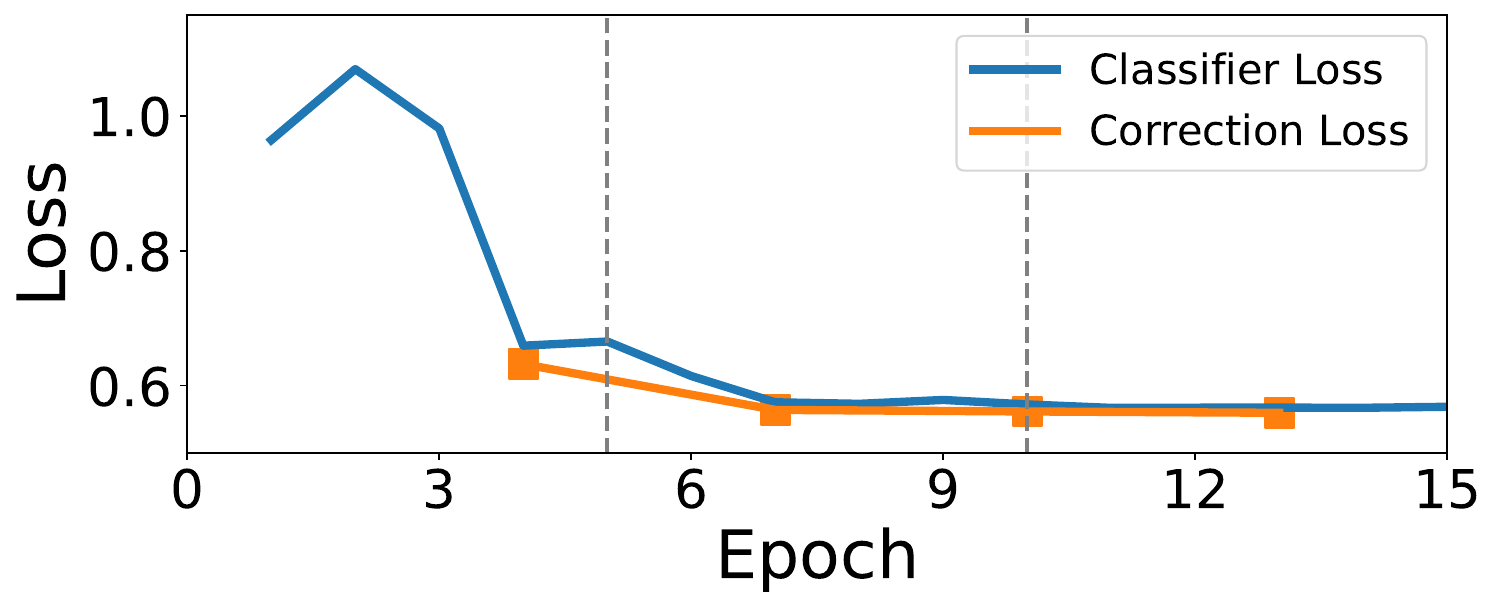}
        \caption{Classification error on test data and clean data during training on Clothing1M. The vertical dashed lines represent the epochs at which the learning rate decays at 5 and 10 epoch.}
        \label{fig:clothing_err}
    \end{minipage}
\end{figure*}

\section{Limitations and Broader Impact}
\label{app:limitations}
 
\paragraph{Limitations}
Our limitations include the dependency on a small, genuinely clean dataset $\Dmeta$ and the presence of several hyper-parameters that may require tuning for new datasets or noise characteristics. 

\paragraph{Broader Impact}
Our method offers several significant broader impacts. By effectively relabeling training samples, our method directly reduces label noise and variance within the training set, leading to more robust model training and improved generalization. This inherent label correction capability suggests that our framework could be compatible with, and potentially serve as an enhancement for, other LNL strategies, for example, by providing cleaner labels for their subsequent processing stages. The closed-loop, iterative nature of our framework, where the classifier and correction function co-evolve under the continuous guidance of clean data (via $\Dmeta$), inherently limits the accumulation of errors and reduces the risk of model collapse, a common challenge in many self-correction or pseudo-labeling schemes.

Beyond standard LNL scenarios, our approach may offer promise for certain transfer learning settings. For instance, it could potentially establish a more robust mapping between a source and a target domain by correcting noisy or mismatched labels in the source domain, guided by feedback from a small, clean dataset representative of the target domain. This would effectively refine the source data, akin to adaptively improving sample quality for more effective knowledge transfer.

\newpage

\begin{table*}[t]
\caption{Experimental details for our method across different datasets.}
\label{tab:experiment-details}
\begin{center}
\begin{small}
\resizebox{0.75\textwidth}{!}{%
\begin{tabular}{l | c c c}
    \toprule
        {Dataset} & 
        {CIFAR-10} & 
        {CIFAR-100} & 
        {Clothing1M}\\
        {Architecture} & ResNet-34 & ResNet-34 & ResNet-50 \\
        {Pretraining} & None & None & ImageNet \\
        {Training Epochs} & 100 & 100 & 15 \\
        \midrule
        {Training Augmentations} &
        \thead{Horizontal Flip \\ Random Crop} &
        \thead{Horizontal Flip \\ Random Crop} & 
        \thead{Resized Crop \\ Horizontal Flip \\
        Random ColorJitter \\ Random Grayscale
        }
        \\
        {Optimizer} & SGD & SGD & SGD \\
        {Batch Size} & $128$ & $128$ & $256$ \\
        {Momentum} & $0.9$ & $0.9$ & $0.9$ \\
        {Weight decay} & $5e^{-4}$ & $5e^{-4}$ & $5e^{-4}$ \\
        {Initial LR} & $0.1$ & $0.1$ & $0.01$ \\
        {Scheduler} & LR-step & LR-step & LR-step \\
        {Scheduler Milestone} & $60,80$ & $60,80$ & $5,10$ \\
        {Scheduler Step-Decay} & $0.1$ & $0.1$ & $0.1$ \\
        \midrule
        {Label Correction Frequency} & 5  & 5  &  3  \\
        \cdashline{2-4}
        \noalign{\vskip 0.25ex}
        {Balancing coefficient $\lambda$} & \multicolumn{3}{c}{0.5} \\
        {Clean Dataset Split Ratio} & \multicolumn{3}{c}{0.8 : 0.2} \\
        {Correction Network Type} & \multicolumn{3}{c}{2-layer MLP} \\
        {Correction Network Hidden Units} & \multicolumn{3}{c}{256} \\
        {Correction Network Activation} & \multicolumn{3}{c}{ReLU} \\
        {Correction Network Optimizer} & \multicolumn{3}{c}{SGD} \\
        {Correction Network LR} & \multicolumn{3}{c}{$1e^{-3}  \rightarrow 1e^{-4}$} \\
    \bottomrule
\end{tabular}
}
\end{small}
\end{center}
\end{table*}

\newpage

\newpage

\begin{figure*}[htbp]
\centering 
\includegraphics[width=.95\textwidth]{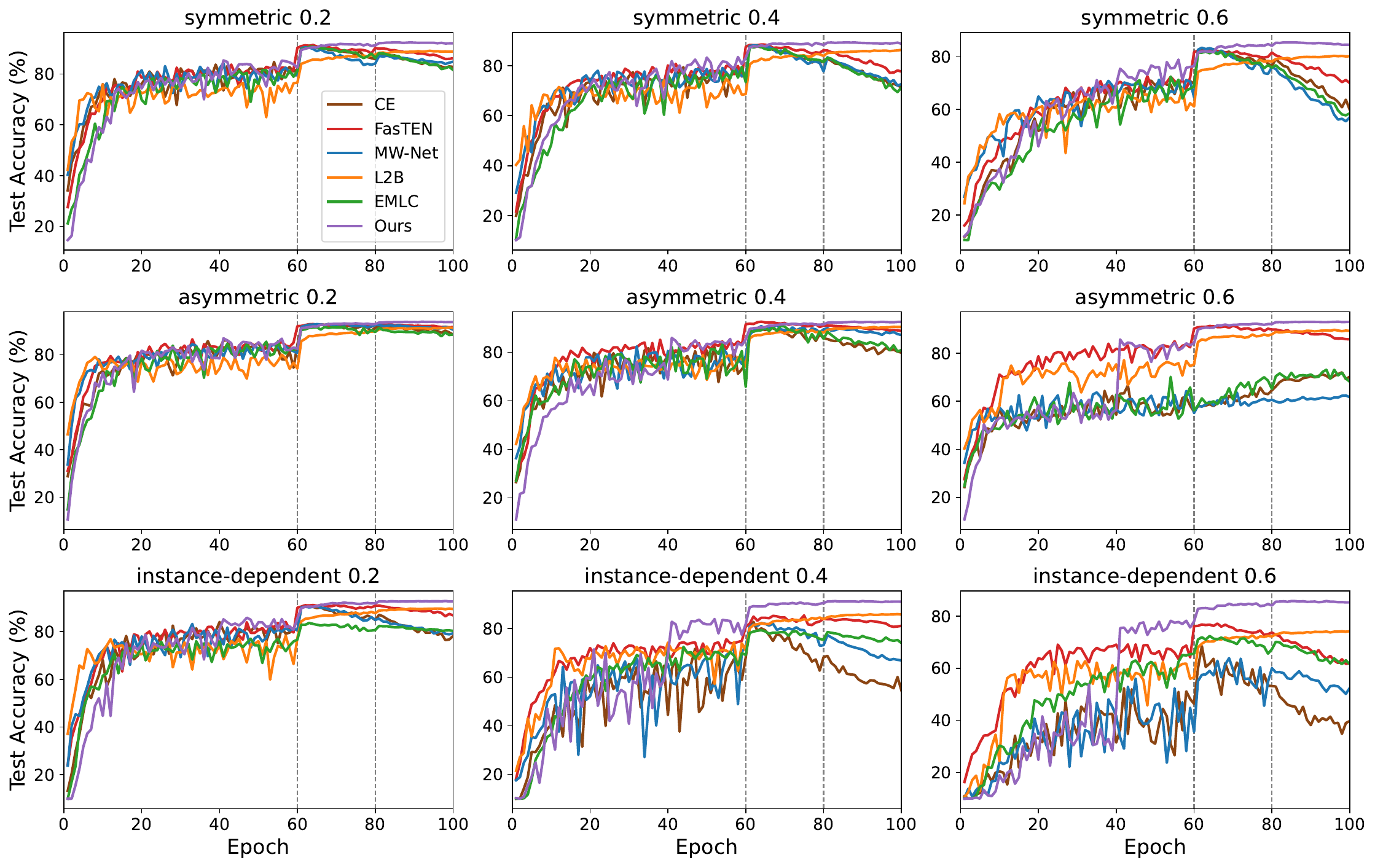} 
\caption{Training dynamics of all methods on CIFAR-10 under different noise scenarios.}
\label{fig:cifar10}
\end{figure*}

\begin{figure*}[htbp]
\centering 
\includegraphics[width=.98\textwidth]{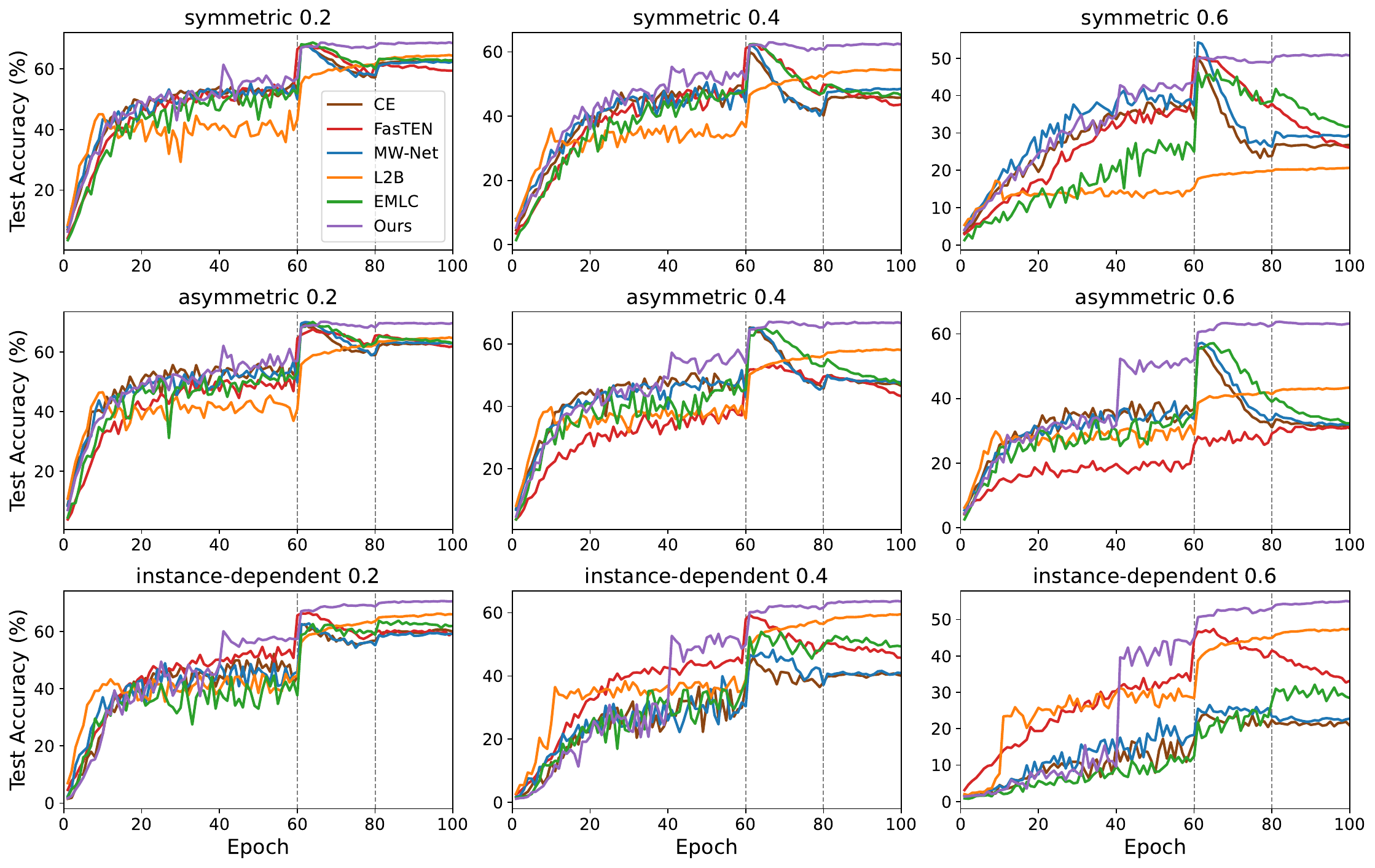} 
\caption{Training dynamics of all methods on CIFAR-100 under different noise scenarios.}
\label{fig:cifar100}
\end{figure*}

\end{document}